\newcommand{\paroutline}[3][false]{%
    \ifnum\pdfstrcmp{#1}{true}=0
        #3
    \else
        [\textit{\textcolor{DiverseMagenta}{#2}}] \textcolor{AccentBlue}{#3}
    \fi
}
\newcommand{\ifcondition}{\textbf{ if }}
\newcommand{\otherwisecondition}{\textbf{ otherwise }}
\newcommand{\lmone}{\mymacro{p}}
\newcommand{\lmtwo}{\mymacro{q}}
\DeclareTextFontCommand{\breaktexttt}{\ttfamily\hyphenchar\font=45\relax}
\newcommand{\vtheta}{\mymacro{\boldsymbol{\theta}}}
\newcommand{\Count}{\mymacro{\#}}
\newcommand{\COfC}{\mymacro{r}}
\newcommand{\CountFun}[1]{\mymacro{\Count\mleft(#1\mright)}}
\newcommand{\typecount}{\mymacro{\Count^n_{\textsc{t}}}}
\newcommand{\CountGT}{\mymacro{\Count_{\mathrm{GT}}}}
\newcommand{\CountKZ}{\mymacro{\Count_{\mathrm{K}}}}
\newcommand{\pGT}{\mymacro{\smoothed{q}^n_{\mathrm{GT}}}}
\newcommand{\pJM}{\mymacro{\smoothed{q}^n_{\mathrm{JM}}}}
\newcommand{\pJMminus}{\mymacro{\smoothed{q}^{n-1}_{\mathrm{JM}}}}
\newcommand{\pK}{\mymacro{\smoothed{q}^n_{\mathrm{K}}}}
\newcommand{\pKN}{\mymacro{\smoothed{q}^n_{\mathrm{KEN}}}}
\newcommand{\pKNunigram}{\mymacro{\smoothed{q}^1_{\mathrm{KEN}}}}
\newcommand{\pKNminus}{\mymacro{\smoothed{q}^{n-1}_{\mathrm{KEN}}}}
\newcommand{\lambdapos}{\textcolor{ETHBlue}{Z_{+}}}
\newcommand{\lambdaneg}{\textcolor{ETHBlue}{Z_{-}}}
\newcommand{\ind}[1]{\mathbbm{1} \left\{ #1 \right\}}
\newcommand{\bos}{\mymacro{\textsc{bos}}}
\newcommand{\eos}{\mymacro{\textsc{eos}}}
\newcommand{\alphabet}{\mymacro{\Sigma}}
\newcommand{\alphabeteos}{\mymacro{\overline{\Sigma}}}
\newcommand{\kleene}[1]{\mymacro{#1^*}}
\newcommand{\defn}[1]{\mymacro{\textbf{#1}}}
\newcommand{\defeq}{\mathrel{\stackrel{\textnormal{\tiny def}}{=}}}
\newcommand{\params}{\mymacro{\boldsymbol{\theta}}}
\newcommand{\qtheta}{\mymacro{q_{\vtheta}}}
\newcommand{\data}{\mymacro{\mathcal{D}}}
\newcommand{\pemp}{\mymacro{p_{\data}}}
\newcommand{\pngramemp}{\mymacro{p_{\data}^n}}
\newcommand{\smoothpngramemp}{\mymacro{\smoothed{p}_{\data}^n}}
\newcommand{\smoothed}[1]{\mymacro{\tilde{#1}}}
\newcommand{\qsmooth}{\mymacro{\smoothed{q}}}
\newcommand{\qsmoothTheta}{\mymacro{\qsmooth_\params}}
\newcommand{\ppos}{\mymacro{p_{+}}}
\newcommand{\pneg}{\mymacro{p_{-}}}
\newcommand{\qthetasmooth}{\mymacro{\smoothed{q}_\params}}
\newcommand{\qngram}{\mymacro{q_{\mathrm{MLE}}^n}}
\newcommand{\qngramsmooth}{\mymacro{\smoothed{q}_{\mathrm{MLE}}^n}}
\newcommand{\ppre}{\mymacro{\pi}}
\newcommand{\ppreT}{\mymacro{\ppre_T}}
\newcommand{\p}{\mymacro{p}}
\newcommand{\word}{\mymacro{x}}
\newcommand{\wordtwo}{\mymacro{y}}
\newcommand{\regPar}{\mymacro{\gamma}}
\newcommand{\regParLS}{\mymacro{\gamma_{\mathrm{LS}}}}
\newcommand{\regParSpace}{\mymacro{\Gamma}}
\newcommand{\regParPos}{\mymacro{\regPar_{+}}}
\newcommand{\regParNeg}{\mymacro{\regPar_{-}}}
\newcommand{\prefixemp}{\mymacro{\pi}_{\data}}
\newcommand{\str}{\mymacro{\boldsymbol{x}}}
\newcommand{\strtwo}{\mymacro{\boldsymbol{y}}}
\newcommand{\nstr}{\mymacro{\str^{n}}}
\newcommand{\nstrt}{\mymacro{\str^{n}_t}}
\newcommand{\nmstr}{\mymacro{\str^{n-1}}}
\newcommand{\history}{\mymacro{\nstr}}
\newcommand{\historyt}{\mymacro{\nstrt}}
\newcommand{\regularizer}{\mymacro{\mathcal{R}}}
\newcommand{\regularizerls}{\mymacro{\regularizer_{\mathrm{LS}}}}
\newcommand{\sampleidx}{\mymacro{m}}
\newcommand{\numsamples}{\mymacro{M}}
\newcommand{\set}[1]{\left\{ #1 \right\}}
\newcommand{\eosalphabet}{\mymacro{\overline{\alphabet}}}
\newcommand{\bosstringsn}{\mymacro{\alphabet^{n-1}_\bos}}
\newcommand{\bosstrings}{\mymacro{\alphabet^\ast_\bos}}
\def\1{\mathbf{1}}
\newcommand{\dataset}{{\mymacro{\mathcal{D}}}}
\def\ru{{{\mymacro{u}}}}
\def\vtheta{{{\mymacro{ \boldsymbol{\theta}}}}}
\newcommand{\KL}{{\mymacro{D_{\mathrm{KL}}}}}
\newcommand{\entropy}{{\mymacro{\mathrm{H}}}}
\newcommand{\bigO}[1]{{\mymacro{ \mathcal{O}\left(#1\right)}}}
\newcommand{\annot}[1]{\textcolor{black!50}{\scaleto{\quad\text{#1}}{8pt}}}
\newcommand{\dd}[1]{\mymacro{\scaleto{\,\pm#1}{6pt}}}
\newcommand{\pars}[1]{\mymacro{\scaleto{#1}{8pt}}}
\newcommand{\textword}[1]{\mymacro{\textit{#1}}}
\title{The Role of $n$-gram Smoothing in the Age of Neural Networks}
\author{Luca Malagutti~\;~\;~Andrius Buinovskij~\;~\;~Anej Svete\\
\bf{Clara Meister}~\;~\;~\bf{Afra Amini}~\;~\;~\bf{Ryan Cotterell} \\
  \texttt{\href{mailto:lmalagutti@inf.ethz.ch}{lmalagutti@inf.ethz.ch}}~\;~\texttt{\href{mailto:andriusb@student.ethz.ch}{andriusb@student.ethz.ch}}\\
    \texttt{\{\href{mailto:asvete@inf.ethz.ch}{asvete}, \href{mailto:meistecl@inf.ethz.ch}{meistecl}, \href{mailto:afra.amini@inf.ethz.ch}{aamini}, \href{ryan.cotterell@inf.ethz.ch}{ryan.cotterell}\}@inf.ethz.ch} \\
   {%
\setlength{\fboxsep}{2.5pt}%
\setlength{\fboxrule}{2.5pt}%
\fcolorbox{white}{white}{
    \includegraphics[width=.15\linewidth]{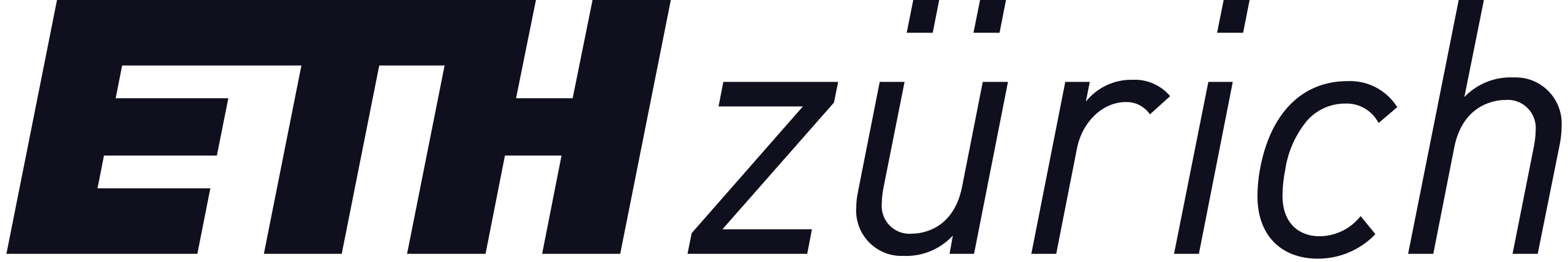}
}
}
}
\begin{document}
\maketitle

\begin{abstract}
    For nearly three decades, language models derived from the $n$-gram assumption held the state of the art on the task.
    The key to their success lay in the application of various smoothing techniques that served to combat overfitting.
    However, when neural language models toppled $n$-gram models as the best performers, $n$-gram smoothing techniques became less relevant.
    Indeed, it would hardly be an understatement to suggest that the line of inquiry into $n$-gram smoothing techniques became dormant.
    This paper re-opens the role classical $n$-gram smoothing techniques may play in the age of neural language models.
    First, we draw a formal equivalence between label smoothing, a popular regularization technique for neural language models, and add-$\lambda$ smoothing.
    Second, we derive a generalized framework for converting \emph{any} $n$-gram smoothing technique into a regularizer compatible with neural language models.
    Our empirical results find that our novel regularizers are comparable to and, indeed, sometimes outperform label smoothing on language modeling and machine translation.
    
\vspace{0.5em}
\hspace{.5em}\includegraphics[width=1.25em,height=1.15em]{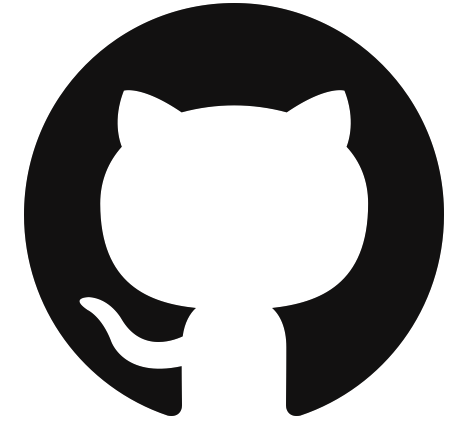}\hspace{.75em}
\parbox{\dimexpr\linewidth-7\fboxsep-7\fboxrule}{\url{https://github.com/rycolab/ngram_regularizers}}
\vspace{-.5em}
\end{abstract}

\section{Introduction} \label{sec:intro}
Let $\alphabet$ be an \defn{alphabet}.\footnote{An alphabet is a finite, non-empty set.}
A \defn{language model} is a probability distribution $\p$ over $\kleene{\alphabet}$, the set of all strings $\str = x_1 \cdots x_T $ with symbols $x_t$ drawn from $\alphabet$. 
A fundamental task in natural language processing (NLP) is to estimate a language model---often from a parametric family---that places a high probability on held-out, human-generated text. 
A common design choice is to construct a locally normalized language model, i.e., one which factorizes autoregressively\footnote{Autoregressivization is without loss of generality \citep[Theorem 2.4.2]{cotterell2023formal}.} as $\p(\str) = \p(\eos \mid \str) \prod_{t=1}^{|\str|} \p(\word_t \mid\str_{<t})$, where $\eos \not\in\alphabet$ is a distinguished \underline{e}nd-\underline{o}f-\underline{s}tring symbol and $\str_{<t} \defeq x_1 \cdots x_{t-1}$ is a prefix of $\str$.\looseness=-1

For years, the best parametric families of language models for this estimation task applied the $n$-gram assumption, detailed below.
\begin{figure}
    \centering
    \begin{tikzpicture}
        \node[align=center] (data) at (0, 1) {$\dataset$};
        \node[align=center] (pData) at (0, 0) {$\pemp$};
        \node[align=center] (qtheta) at (3, 0) {$\qtheta$};
        \node[align=center] (pDataSmooth) at (0, -2.5) {$\smoothpngramemp$};
        \node[align=center] (qsmoothTheta) at (3, -2.5) {$\qsmoothTheta$};
        
        \node[align=center] at (4.5, 0 + 0.15) {\footnotesize Language};
        \node[align=center] at (4.5, 0 - 0.15) {\footnotesize model};
        
        \node[align=center] at (4.5, -2.5 + 0.4) {\footnotesize Smoothed or};
        \node[align=center] at (4.5, -2.5) {\footnotesize regularized};
        \node[align=center] at (4.5, -2.5 - 0.4) {\footnotesize language model};
        
        \node[align=center] at (-1.25, -1.25 + 0.4) {\footnotesize Smoothing};
        \node[align=center] at (-1.25, -1.25) {\footnotesize technique};
        \node[align=center] at (-1.25, -1.25 - 0.4) {\footnotesize (e.g., add-$\lambda$)};
        
        \draw[->] (data.south) -- (pData.north);
        \draw[-Latex, thick, dashed] (pData.east) -- (qtheta.west) node[midway, sloped, above] {\footnotesize $\min \KL$};
        \draw[->] (pData.south) -- (pDataSmooth.north);
        \draw[-Latex, thick, dashed] (pDataSmooth.east) -- (qsmoothTheta.west) node[midway, sloped, above] {\footnotesize $\min \KL$};
        \draw[-Latex, thick, dashed, ETHBlue] (pData) -- (qsmoothTheta) node[midway, sloped, above] {\footnotesize $\min \KL + \regularizer$};
    \end{tikzpicture}
    \caption{An illustration of the introduced framework.
    With maximum-likelihood estimation (MLE), a language model $\qtheta$ is trained to match $\pemp$, the empirical distribution induced by a dataset $\dataset$.
    However, we can also modify (\emph{smooth}) $\pemp$ into $\smoothpngramemp$ and train a language model $\qthetasmooth$ on $\smoothpngramemp$.
    We show that the latter can be thought of as training $\qthetasmooth$ with a regularized maximum-likelihood objective.\looseness=-1}
    \label{fig:figure-1}
    \vspace{-10pt}
\end{figure}
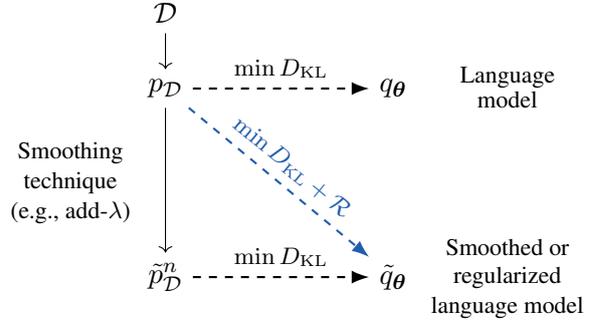 
\begin{assumption}[$n$-gram assumption]\label{ass:ngram}
    A language model obeys the \defn{$n$-gram assumption} if the following conditional independence holds
    \begin{equation}
        \begin{aligned}
            \p(\word_t \mid\str_{<t})  & \defeq \p(\word_t \mid  \word_1 \cdots \word_{t-1})                          \\
                           & =  \p(\word_t \mid \word_{t-n+1} \cdots \word_{t-1}) \\
                           & \defeq \p\left(\word_t \mid \historyt\right)
        \end{aligned}
    \end{equation}
    where $\word_t$ for $t < 1$ is treated as a distinguished padding symbol $\bos \not\in \alphabet$.\footnote{The symbol $\bos$ stands for \underline{b}eginning \underline{o}f \underline{s}tring.}
    We will call $\historyt \defeq \word_{t-n+1} \cdots \word_{t-1}$ the \defn{history} of $\word_t$.
    We use $\history$ for histories where the time step $t$ is irrelevant.
\end{assumption}
\noindent \Cref{ass:ngram} was, historically, considered a practically effective manner to fight the curse of dimensionality, despite its inability to attend to contexts longer than $n-1$ words and thus capture long-range dependencies.\footnote{$n$-gram LMs are linguistically very primitive. They are an instance of \emph{strictly local} languages, one of the simplest language classes \citep{Jager2012-kv}. It has long been argued that we require LMs that are more expressive and capture more complex phenomena in human language \citep{Chomsky57a,chelba-jelinek-1998-exploiting-syntactic,abney1999}.}

The maximum-likelihood estimator of a model $\qngram$ under the $n$-gram assumption is straightforward to derive.\footnote{We discuss the MLE of language models in detail in \cref{sec:preliminaries}.}
Indeed, one can express it simply using counts of sub-string occurrences:
\begin{equation}
    \qngram\left(\word \mid \history\right) = \frac{\CountFun{\history \word}}{\CountFun{\history}} \label{eq:qngram}
\end{equation}
where, intuitively, $\CountFun{\strtwo}$ denotes the number of times substring $\strtwo$ occurs in the training dataset.\footnote{We formally introduce the counting function $\Count$ in \cref{sec:preliminaries}  accounting for the presence of $\eos$ at the end of each string.} 
Critically, the simplicity enforced by \Cref{ass:ngram} alone is \emph{not} enough to prevent overfitting for reasonably sized $n$: a minimally parameterized $n$-gram model has $\bigO{|\alphabet|^n}$ free parameters, one for each $n$-gram.
Therefore, the maximum-likelihood solution of an $n$-gram language model overfits on its training data by assigning probability $0$ to any string containing an $n$-gram that does not occur in the training dataset, which is undesirable.
To solve this issue, in addition to making the $n$-gram assumption, modelers applied a variety of \defn{smoothing} techniques to regularize the estimation of $n$-gram probabilities and obtain smoothed probabilities $\qngramsmooth$.

One of the simplest $n$-gram smoothing techniques is known as \defn{add-$\lambda$ smoothing},\footnote{In the special case that $\lambda =1$, this technique is generally referred to as Lidstone smoothing.} which can be informally described as hallucinating $n$-grams in the training dataset that occur at frequency $\lambda$.
In other words, the counts of all $n$-grams---including those that are unobserved in the dataset---are augmented by $\lambda$:
\begin{equation} \label{eq:add-lambda-ml}
    \qngramsmooth\left(\word \mid \history\right) \defeq \frac{\CountFun{\history \word} + \lambda}{\CountFun{\history} + \left(|\alphabet| + 1\right) \lambda}.
\end{equation}
In the context of NLP, add-$\lambda$ smoothing has often served a pedagogical purpose and is thus commonly taught, but its efficacy in practice was considered limited \citep{eisner2023hw}. 
However, researchers developed more sophisticated related smoothing methods \cite{jelinek,katz,kneser} that were useful in practice.
\citet{kneser}, for example, was long considered the best available method \citep{chen_empirical} and its popularity inspired a principled Bayesian interpretation \citep{teh2006bayesian}. 
Interestingly though, add-$\lambda$ smoothing still has a place in today's literature; as we prove in \Cref{background:label-smoothing}, it is identical to a regularization technique called label smoothing, common in the training of (conditional) neural language models \citep{pereyra2017regularizing,meister2020generalized}, and particularly common in machine translation~\citep{nllbteam2022language}.\looseness=-1

In this context, we present the primary theoretical research question addressed in our paper.
If add-$\lambda$ smoothing is equivalent to label smoothing---an effective regularization technique for training today's neural language models---can we reverse-engineer further regularization methods starting from other $n$-gram smoothing methods?
Given that add-$\lambda$ smoothing was not known to perform well compared to other smoothing techniques in the context of $n$-gram language models, it is natural to suspect that reverse-engineered regularizers based on empirically more successful smoothing techniques may indeed outperform label smoothing.

To this end, we derive a straightforward relationship between training on a smoothed empirical distribution and additive regularization methods, e.g., entropy regularization. 
Using that relationship, we further show that \emph{any} smoothing method can be reformulated as an additive regularization of the standard maximum-likelihood objective, and, hence, we provide an explicit way to construct the regularizer that can be applied to the training of any (neural) language model, helping connect classical smoothing methods and modern language models by introducing a way to incorporate smoothing techniques into the training of neural language models.
We complement our theoretical analysis by empirically verifying the validity of our proposed methods on two small-scale datasets and observe that some regularizers based on more complex $n$-gram smoothing techniques do indeed perform better than label smoothing on language modeling and machine translation.\looseness=-1

\section{Label Smoothing and add-$\lambda$ Smoothing}\label{background:label-smoothing}
In this section, we derive a formal relationship between the add-$\lambda$ smoothing applied to $n$-gram models and label smoothing.
The relationship is based on a regularizer that, when applied during MLE, simulates add-$\lambda$ smoothing exactly.
Additionally, we contend that, in this sense, label smoothing \emph{generalizes} add-$\lambda$ smoothing from a method applicable only to $n$-gram language models to one that can be used with neural language models.\footnote{There is no obvious manner to apply add-$\lambda$ smoothing to the training of a neural language model since it is designed as an operation on count-based data.}
To the best of the authors' knowledge, this derivation and the relationship it exposes are novel.
We re-use this meta paradigm in \Cref{sec:new-regularizers} to derive novel regularizers, which we then compare experimentally to label smoothing.\looseness=-1

\subsection{Preliminaries} \label{sec:preliminaries}

\paragraph{Some Notation.}
We define $\eosalphabet \defeq \alphabet \cup \set{\eos}$.
To make it possible to always condition on exactly $n-1$ symbols, whenever the history is shorter than $n-1$ symbols, we prepend a string with the appropriate number of $\bos$ symbols. 
Thus, we define $\bosstringsn \defeq \bigcup_{\ell = 1}^{n - 1} \set{\bos}^\ell \times \alphabet^{n - 1 - \ell}$ (the set of all possible histories) and $\bosstrings \defeq \bosstringsn \cup \bigcup_{\ell = n}^\infty \alphabet^\ell$ as the set of all strings that are either prefixed by $\bos$ or contain more than $n - 1$ symbols.

\paragraph{Maximum-likelihood Estimation.}
We now introduce maximum-likelihood estimation in the context of language modeling.
Suppose we observe a collection of samples $\data = \{\str^{(\sampleidx)}\}_{\sampleidx=1}^\numsamples$ where $\str^{(\sampleidx)} \sim \p$ and $\p$ is the distribution over strings that we are trying to model.
Let $\pemp$ be the empirical distribution induced by $\data$, i.e., the probability distribution defined as
\begin{equation}
    \pemp(\str) = \frac{1}{\numsamples}\sum_{\sampleidx=1}^\numsamples \mathbbm{1}\{ \str = \str^{(\sampleidx)}\}.
    \label{eq:empirical}
\end{equation}
Choosing a model $\qtheta$ that minimizes the forward KL divergence $\KL(\pemp \mid\mid \qtheta)$ is known as maximum-likelihood estimation.
Under regularity conditions~\citep{Cam1953OnSA}, MLE is consistent, i.e., in the limit of infinite data, we arrive at the true parameters of the data-generating distribution if the data-generating distribution indeed came from the model's parametric family.\looseness=-1

\paragraph{Counting Substrings in a Dataset.}
As discussed in \cref{sec:intro}, estimation of $n$-gram language models relies on \emph{counting} the occurrences of various substrings in a dataset.
Given the dataset $\data = \{\str^{(\sampleidx)}\}_{\sampleidx=1}^\numsamples$, we define the counting function $\Count$ as
\begin{subequations}
\begin{align}
    \CountFun{\str} \defeq \sum_{m=1}^M \sum_{t=1}^{|\str| + 1}\sum_{s=t+1}^{|\str| + 1} \ind{\str = \str^{\left(\sampleidx\right)}_{t:s}}
     \\
    \CountFun{\str\eos} \defeq \sum_{m=1}^M \sum_{t=1}^{|\str|+1}\ind{\str = \str^{\left(\sampleidx\right)}_{t:}},
\end{align}\label{eq:countingfunction}
\end{subequations}
\!\!where $\str_{t:s} \defeq \word_{t} \cdots \word_{s-1}$ and $\str_{t:} \defeq \str_{t:|\str|+1}$.
As hinted at in~\cref{sec:intro}, $\Count$ counts the number of times the string $\str$ appears as a substring of a string in $\dataset$.\footnote{Note that, for complete generality, $\Count$ should also take the dataset $\dataset$ as argument. For conciseness, we leave this parameter implicit.}
This is to be distinguished from simply counting the number of occurrences of $\str$ in $\dataset$.

\paragraph{Empirical Distributions.}
Using the counting functions defined in~\cref{eq:countingfunction}, we define two empirical probability distributions.
The \defn{autoregressive empirical probability} distribution is defined as
\begin{equation}
    \pemp(\wordtwo\mid\str) \defeq \frac{\CountFun{\str\wordtwo}}{\CountFun{\str}},
    \label{eq:autoregressiveempirical}
\end{equation}
where $\str_{<t} \in \kleene{\alphabet}$ and $\wordtwo \in \alphabeteos$.~\cref{eq:autoregressiveempirical} is the autoregressive decomposition of~\cref{eq:empirical}.
Then, the \defn{autoregressive empirical $n$-gram probability} distribution is defined as follows
\begin{equation}
    \pngramemp(\wordtwo\mid\history) \defeq \frac{\CountFun{\history\wordtwo}}{\CountFun{\history}},
\end{equation}
where $\history \in \bosstringsn$ and $\wordtwo \in \alphabeteos$. 
Note that the autoregressive empirical $n$-gram probabilities are equivalent to the maximum-likelihood estimator of an $n$-gram model $\qngram$, as seen in~\cref{eq:qngram}.

\paragraph{Prefix Probabilities.} 
Prefix probabilities~\cite[\S2.4.2]{cotterell2023formal} are useful quantities in language modeling and are found both in the language model's autoregressive factorization as well as in maximum-likelihood estimation~\citep{jelinek-lafferty-1991-computation, nowak-cotterell-2023-fast}.
We give a formal definition below.
\begin{definition}\label{def:prefix}
    We define the \defn{prefix probability} function $\ppre$ of a language model $\p$ over $\kleene{\alphabet}$ as
    \begin{equation}\label{eq:prefix-prob}
        \ppre(\str) \!\defeq \! \sum_{\strtwo \in \kleene{\alphabet}} \p(\strtwo) \mathbbm{1}\{ \str \preceq \strtwo\} = \sum_{\strtwo \in \kleene{\alphabet}} \p(\str\strtwo),
    \end{equation}
    where $\str \preceq \strtwo$ indicates that $\str$ is a prefix of $\strtwo$.
\end{definition}

We now relate the MLE of a language model to the matching of \emph{next-symbol} conditional distributions.
It will later allow us to reason about the relationship between smoothing and regularized maximum-likelihood estimation.

\begin{restatable}{theorem}{kllocalglobal} \label{thm:ryans-theorem}
Let $\lmone$ and $\lmtwo$ be two language models over $\alphabet$ and $\ppre$ the prefix probability function of $\lmone$. 
Furthermore, we assume that $\entropy(p, q) < \infty$.
Then, the following equality holds
\begin{equation}
\begin{aligned}
\KL&(\lmone \mid\mid \lmtwo ) \label{eq:globallocalkl}\\
&= \sum_{\str \in \kleene{\alphabet}} \!\ppre(\str)\KL(\lmone(\cdot \mid \str) \mid\mid \lmtwo(\cdot \mid \str)).
\end{aligned}
\end{equation}
\end{restatable}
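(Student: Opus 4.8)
The plan is to unfold the autoregressive factorizations of $\lmone$ and $\lmtwo$, expand $\log\frac{\lmone(\str)}{\lmtwo(\str)}$ into a sum of log-ratios of next-symbol conditionals, and then regroup the resulting double sum by conditioning context. Fix a string $\str = x_1\cdots x_T$ and adopt the convention $x_{T+1}\defeq\eos$, so that $\lmone(\str) = \prod_{t=1}^{|\str|+1}\lmone(x_t\mid\str_{<t})$ and likewise for $\lmtwo$. Taking logs gives
\begin{equation*}
\log\frac{\lmone(\str)}{\lmtwo(\str)} \;=\; \sum_{t=1}^{|\str|+1}\log\frac{\lmone(x_t\mid\str_{<t})}{\lmtwo(x_t\mid\str_{<t})}.
\end{equation*}
Multiplying by $\lmone(\str)$ and summing over $\str\in\kleene{\alphabet}$ recovers $\KL(\lmone\mid\mid\lmtwo)$ on the left, so it remains to reorganize the right-hand side.

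Next I would collect, for each context $\uu\in\kleene{\alphabet}$ and symbol $\wordtwo\in\alphabeteos$, all terms of the double sum that carry the factor $\log\frac{\lmone(\wordtwo\mid\uu)}{\lmtwo(\wordtwo\mid\uu)}$. For $\wordtwo\in\alphabet$, the pairs $(\str,t)$ that contribute are exactly those with $\str_{<t}=\uu$ and $x_t=\wordtwo$, i.e.\ those with $\uu\wordtwo\preceq\str$; their total $\lmone$-mass is $\ppre(\uu\wordtwo)$ by \cref{def:prefix}. For $\wordtwo=\eos$, the only contributing pair is $\str=\uu$ with $t=|\uu|+1$, of mass $\lmone(\uu)$. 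Using the elementary identities $\ppre(\uu\wordtwo)=\ppre(\uu)\,\lmone(\wordtwo\mid\uu)$ for $\wordtwo\in\alphabet$ and $\lmone(\uu)=\ppre(\uu)\,\lmone(\eos\mid\uu)$ --- both immediate from \cref{eq:prefix-prob} and the autoregressive factorization --- the block of terms attached to context $\uu$ equals $\ppre(\uu)\sum_{\wordtwo\in\alphabeteos}\lmone(\wordtwo\mid\uu)\log\frac{\lmone(\wordtwo\mid\uu)}{\lmtwo(\wordtwo\mid\uu)} = \ppre(\uu)\,\KL(\lmone(\cdot\mid\uu)\mid\mid\lmtwo(\cdot\mid\uu))$, and summing over $\uu$ yields the claimed identity.

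The step requiring the most care is the interchange of the two summations, since the log-ratios are not sign-definite and so Fubini/Tonelli does not apply directly to the combined sum. I would circumvent this by writing $\KL(\lmone\mid\mid\lmtwo)=\entropy(\lmone,\lmtwo)-\entropy(\lmone)$ and reorganizing the two pieces separately: $\entropy(\lmone,\lmtwo)$ is the sum over all $(\str,t)$ of the \emph{non-negative} quantities $\lmone(\str)\bigl(-\log\lmtwo(x_t\mid\str_{<t})\bigr)$, and similarly for $\entropy(\lmone)$ with $\lmtwo$ replaced by $\lmone$, so Tonelli licenses reordering each into the context-indexed form above. The hypothesis $\entropy(\lmone,\lmtwo)<\infty$ makes the first reorganized sum finite, and $\entropy(\lmone)\le\entropy(\lmone,\lmtwo)$ (since $\entropy(\lmone,\lmtwo)=\entropy(\lmone)+\KL(\lmone\mid\mid\lmtwo)$ with $\KL\ge 0$) makes the second one finite, so the two may be subtracted block by block to produce the stated expression. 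The standard conventions $0\log 0=0$ and $0\log\tfrac{0}{0}=0$ dispose of contexts $\uu$ with $\ppre(\uu)=0$, which contribute nothing on either side, and finiteness of $\entropy(\lmone,\lmtwo)$ rules out the degenerate case $\lmtwo(\wordtwo\mid\uu)=0<\lmone(\wordtwo\mid\uu)$ with $\ppre(\uu)>0$, so no term is an undefined $\infty-\infty$.
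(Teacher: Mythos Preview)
Your proof is correct and shares the paper's essential strategy: split $\KL(\lmone\mid\mid\lmtwo)=\entropy(\lmone,\lmtwo)-\entropy(\lmone)$, apply Tonelli to each of the two sign-definite pieces to regroup the double sum by conditioning context, and invoke $\entropy(\lmone,\lmtwo)<\infty$ together with $\entropy(\lmone)\le\entropy(\lmone,\lmtwo)$ to rule out $\infty-\infty$ when subtracting. The one notable difference is in how the regrouping is carried out. The paper introduces an auxiliary \emph{bounded} prefix probability $\ppreT(\str)\defeq\sum_{\strtwo}\mathbbm{1}\{\str\preceq\strtwo\}\lmone(\strtwo\mid T)$, first decomposes the cross-entropy by string length $T$, manipulates the (now finite) inner sums, and only at the end sums over $T$ to recover $\ppre$ via $\ppre(\str)=\sum_T\lmone(T)\ppreT(\str)$. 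You instead partition the index set $\{(\str,t):\str\in\kleene{\alphabet},\,1\le t\le|\str|+1\}$ directly by the map $(\str,t)\mapsto(\str_{<t},x_t)$ and read off the fiber masses as $\ppre(\uu\wordtwo)$ or $\lmone(\uu)$. Your route is more elementary---it avoids the length-conditioned detour entirely---at the cost of leaving the Tonelli application slightly more implicit; the paper's route makes each interchange of sums fully explicit but carries extra notation. Both arrive at the same identity by the same mechanism.
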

\begin{proof}
    \cref{app:prefix_and_kl}.
\end{proof}
\noindent We also consider the following corollary for $\lmone = \pemp$ and an $n$-gram language model $q$.
\begin{restatable}[]{corollary}{kldivml}
    Let $\pemp$ be an empirical distribution induced by a dataset $\dataset$.
    Let $q$ be an $n$-gram language model.
    Then, it holds that:
    \begin{align} \label{eq:ml-estimate}  
         \KL& \left(\pemp \mid\mid q \right) \\
         \propto & \sum_{\nstr \in \bosstringsn} \!\!\!\!\! \CountFun{\nstr} \KL\left( \pemp(\cdot \mid \nstr) \mid\mid q(\cdot \mid \nstr) \right) \nonumber.
    \end{align}
\end{restatable}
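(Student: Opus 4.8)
The plan is to establish the identity directly from the definition of the forward KL divergence, using the $n$-gram factorization of $q$ to obtain, at the level of dataset counts, a context-wise decomposition in the spirit of \Cref{thm:ryans-theorem}. Throughout, ``$\propto$'' should be read as ``equal up to an additive term and a positive multiplicative factor, neither of which depends on $q$'' --- precisely what is needed for the maximum-likelihood reading that motivates the corollary. I would first dispose of the degenerate case: if $q(\wordtwo\mid\nstr)=0$ for some history--symbol pair with $\CountFun{\nstr\wordtwo}>0$, then both sides equal $+\infty$ and there is nothing to show; otherwise $\KL(\pemp\mid\mid q)$ is finite and I write $\KL(\pemp\mid\mid q)=\entropy(\pemp,q)-\entropy(\pemp)$, setting aside $\entropy(\pemp)$ as independent of $q$.

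The heart of the proof is the rewriting of the cross-entropy $\entropy(\pemp,q)=-\tfrac{1}{\numsamples}\sum_{\sampleidx=1}^{\numsamples}\log q\!\left(\str^{(\sampleidx)}\right)$. By \Cref{ass:ngram} (including the terminal $\eos$ factor and the $\bos$-padding of histories shorter than $n-1$), each $-\log q\!\left(\str^{(\sampleidx)}\right)$ is a sum of per-position terms $-\log q(\wordtwo\mid\nstr)$, one for every position of $\str^{(\sampleidx)}$, where $\nstr\in\bosstringsn$ is the length-$(n-1)$ history at that position and $\wordtwo\in\alphabeteos$ is the symbol emitted there. Summing over the whole dataset and regrouping these positions by the pair $(\nstr,\wordtwo)$ --- the number of such positions being exactly $\CountFun{\nstr\wordtwo}$ by \cref{eq:countingfunction} --- gives $\entropy(\pemp,q)=-\tfrac{1}{\numsamples}\sum_{\nstr\in\bosstringsn}\sum_{\wordtwo\in\alphabeteos}\CountFun{\nstr\wordtwo}\log q(\wordtwo\mid\nstr)$. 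Substituting $\CountFun{\nstr\wordtwo}=\CountFun{\nstr}\,\pngramemp(\wordtwo\mid\nstr)$, using that $\pngramemp(\cdot\mid\nstr)=\pemp(\cdot\mid\nstr)$ on histories of length $n-1$, and recognizing the inner sum as a conditional cross-entropy then yields $\entropy(\pemp,q)=\tfrac{1}{\numsamples}\sum_{\nstr\in\bosstringsn}\CountFun{\nstr}\,\entropy\!\big(\pemp(\cdot\mid\nstr),q(\cdot\mid\nstr)\big)$.

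It remains to turn each conditional cross-entropy back into a divergence, $\entropy(\pemp(\cdot\mid\nstr),q(\cdot\mid\nstr))=\KL(\pemp(\cdot\mid\nstr)\mid\mid q(\cdot\mid\nstr))+\entropy(\pemp(\cdot\mid\nstr))$; the resulting terms $\tfrac{1}{\numsamples}\sum_{\nstr}\CountFun{\nstr}\,\entropy(\pemp(\cdot\mid\nstr))$ do not depend on $q$ and so join $\entropy(\pemp)$ in the additive constant. Collecting everything, $\KL(\pemp\mid\mid q)=\tfrac{1}{\numsamples}\sum_{\nstr\in\bosstringsn}\CountFun{\nstr}\,\KL(\pemp(\cdot\mid\nstr)\mid\mid q(\cdot\mid\nstr))+\mathrm{const}$, and since $1/\numsamples>0$ this is exactly the asserted proportionality. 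The step I expect to be the main obstacle is the regrouping identity --- checking cleanly that the per-position factors of $\log q$, summed over the dataset, collapse to $\sum_{\nstr,\wordtwo}\CountFun{\nstr\wordtwo}\log q(\wordtwo\mid\nstr)$ --- which hinges on being careful about how histories shorter than $n-1$ are padded with $\bos$ and about the distinction, emphasized in \cref{sec:preliminaries}, between counting substring occurrences and counting whole strings; the remaining manipulations are routine algebra of (conditional) entropies.
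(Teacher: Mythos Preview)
Your proof is correct but follows a different route from the paper. The paper invokes \Cref{thm:ryans-theorem} as a black box to write $\KL(\pemp\mid\mid q)=\sum_{\str\in\kleene{\alphabet}}\prefixemp(\str)\,\KL(\pemp(\cdot\mid\str)\mid\mid q(\cdot\mid\str))$, then replaces $\prefixemp(\str)$ by counts via \cref{eq:empirical-prefix} and collapses full histories $\str$ to length-$(n{-}1)$ histories $\nstr$ using \Cref{ass:ngram}. You instead bypass \Cref{thm:ryans-theorem} entirely: you expand the cross-entropy as a finite sum over the $\numsamples$ training strings, factor each $\log q(\str^{(\sampleidx)})$ directly through the $n$-gram assumption, and regroup the resulting per-position terms by $(\nstr,\wordtwo)$ pairs using the substring-counting function. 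The paper's approach is more modular and frames the corollary as an instance of the general local--global KL identity; your approach is more elementary and self-contained, and it handles more transparently the additive entropy constants (which you make explicit, whereas the paper's $\propto$ chain leaves them somewhat implicit) as well as the step from $\pemp(\cdot\mid\str)$ to $\pemp(\cdot\mid\nstr)$, which in the paper's derivation relies on the regrouping of counts that you identify as the crux. Your flagged obstacle---matching the per-position sum to $\sum_{\nstr,\wordtwo}\CountFun{\nstr\wordtwo}\log q(\wordtwo\mid\nstr)$ with correct $\bos$/$\eos$ bookkeeping---is indeed the only place where care is needed, and your description of how to handle it is sound.
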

\begin{proof}
    \cref{app:smoothing-reg-proofs}.
\end{proof}
\noindent Crucially, the $n$-gram assumption allows us to reduce the infinite sum over $\kleene{\alphabet}$ in the definition of $\KL \left( \pemp \mid\mid \qngram \right)$ to one over $\bosstringsn$ due to the resulting conditional independence of $\word_t$ and $\word_{1} \cdots \word_{t - n}$ given $\historyt = \word_{t - n + 1} \cdots \word_{t - 1}$. 
Further, it allows us to restrict the summation to the $n$-grams that are present in the training corpus.\looseness=-1 

\subsection{Label Smoothing of $n$-gram LMs}
Let $\qtheta$ be a language model parametrized by parameters $\vtheta \in \Theta$. 
We further assume that $\qtheta$ is a differentiable function in $\vtheta$ and that $\Theta$ is compact.
Optimizing the objective in \cref{eq:ml-estimate} gives us the maximum-likelihood estimate of model parameters $\vtheta$.
To prevent overfitting and improve generalization abilities, this estimation can be regularized.
\vspace{-15pt}
\begin{principle}\label{def:regularization}
    The principle of \defn{regularization} states that we should add an inductive bias to the parameter estimation procedure, the goal of which is to help the model generalize to unseen data at the expense of its ability to better fit the training data.\looseness=-1 
\end{principle}

By \Cref{def:regularization}, we can intuitively see that smoothing techniques are a form of regularization. 
However, they form regularization that is defined procedurally in terms of the manipulation of count-based estimates. 
\defn{Label smoothing}, on the other hand, is defined as an additive augmentation of the \emph{training objective}---it represents the addition of the following regularizer to the training objective
\begin{equation}
    \regularizerls(\vtheta, \str) = \KL\left(\ru \mid\mid \qtheta(\cdot \mid \str)\right),
\end{equation}
where $\ru = \textstyle 1/|\eosalphabet|\cdot\mathbf{1}$ is the uniform distribution over $\eosalphabet$.
In words, label smoothing regularizes the maximum-likelihood objective toward a uniform distribution over the next symbol.
In the case of an $n$-gram model, we have the regularized objective
\begin{align} \label{eq:label-smoothing-reg}
        \sum_{\history \in \bosstringsn}\!\!\!\CountFun{\history} \Big[ \KL &\Big( \pemp(\cdot \mid \history) \mid\mid \qtheta(\cdot \mid \history) \Big)\nonumber \\
        & + \regPar \regularizerls(\vtheta, \history) \Big].
    \end{align}
The optimum of \cref{eq:label-smoothing-reg}, $\qsmoothTheta$, is then what we refer to as the label-smoothed version of the maximum-likelihood solution.\footnote{Throughout the paper, we will use the notation $\qsmooth$ for the smoothed version of the distribution $q$.} 

In \Cref{sec:intro}, we introduced add-$\lambda$ smoothing of \emph{$n$-gram} language models as a way to improve their generalization. However, regularization of the form \cref{eq:label-smoothing-reg} can be applied to \emph{any} language model $\qtheta$ whose parameters are learned through standard maximum-likelihood estimation---usually via gradient descent.
Attractively, we can show that, if $\qtheta$ is a $n$-gram language model, regularization from \cref{eq:label-smoothing-reg} is equivalent to add-$\lambda$ smoothing of $n$-gram counts in the sense that its optimum recovers the same model.
We believe this to be the first formal connection made between add-$\lambda$ smoothing and label smoothing of $n$-gram language models.\footnote{The restriction to $n$-gram language models is natural since the simple nature of $n$-gram language models permits the augmentation with hallucinated substring counts. 
Later in the paper, we show how this can be translated to neural language models by pre-processing the empirical data distribution.}
\vspace{-10pt}
\begin{restatable}[]{theorem}{lsreg} \label{thm:label-smoothing-add-lambda-ngrams}
    Estimating an $n$-gram model under regularized MLE with regularizer  $\regularizerls$ with strength parameter $\regPar$ is equivalent to estimating an $n$-gram model and applying add-$\lambda$ smoothing with $\lambda = \frac{\regPar}{|\alphabet| + 1}$.\looseness=-1
\end{restatable}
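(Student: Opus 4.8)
The plan is to solve the per-history optimization problem in \cref{eq:label-smoothing-reg} in closed form and show that the minimizing next-symbol distribution coincides exactly with the add-$\lambda$ formula \cref{eq:add-lambda-ml}. The key observation is that the objective in \cref{eq:label-smoothing-reg} decomposes as a sum over histories $\history \in \bosstringsn$, each term weighted by $\CountFun{\history} > 0$, and each term depends only on the conditional distribution $\qtheta(\cdot \mid \history)$. Since an $n$-gram model has a free parameter (a full categorical distribution over $\eosalphabet$) for each history, minimizing the whole objective is equivalent to independently minimizing each summand. So I would fix a history $\history$, write $\vq \defeq \qtheta(\cdot \mid \history) \in \Delta^{|\eosalphabet|-1}$, and minimize
\begin{equation*}
f(\vq) \defeq \KL\!\left(\pemp(\cdot \mid \history) \mid\mid \vq\right) + \regPar\, \KL\!\left(\ru \mid\mid \vq\right)
\end{equation*}
over the probability simplex.

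The next step is the actual computation. Expanding both KL terms and discarding the entropy terms that do not depend on $\vq$, the objective reduces (up to an additive constant) to $-\sum_{\wordtwo \in \eosalphabet} \bigl(\pemp(\wordtwo \mid \history) + \regPar/|\eosalphabet|\bigr)\log \vq(\wordtwo)$. This is a cross-entropy-type objective: minimizing $-\sum_\wordtwo a_\wordtwo \log \vq(\wordtwo)$ over the simplex with fixed nonnegative coefficients $a_\wordtwo$ gives $\vq(\wordtwo) = a_\wordtwo / \sum_{\wordtwo'} a_{\wordtwo'}$ (a one-line Lagrange-multiplier argument, or invoke Gibbs' inequality). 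Plugging in $a_\wordtwo = \pemp(\wordtwo\mid\history) + \regPar/|\eosalphabet| = \CountFun{\history\wordtwo}/\CountFun{\history} + \regPar/|\eosalphabet|$ and using $|\eosalphabet| = |\alphabet|+1$, the normalizer is $1 + \regPar$, so
\begin{equation*}
\qsmoothTheta(\wordtwo \mid \history) = \frac{\CountFun{\history\wordtwo}/\CountFun{\history} + \regPar/(|\alphabet|+1)}{1 + \regPar} = \frac{\CountFun{\history\wordtwo} + \frac{\regPar}{|\alphabet|+1}\CountFun{\history}}{\CountFun{\history}(1+\regPar)} = \frac{\CountFun{\history\wordtwo} + \lambda}{\CountFun{\history} + (|\alphabet|+1)\lambda}
\end{equation*}
with $\lambda = \regPar/(|\alphabet|+1)$, which is precisely \cref{eq:add-lambda-ml}. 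Thus the regularized-MLE optimum is the add-$\lambda$-smoothed $n$-gram model.

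I would prefer to phrase the reduction to per-history problems as an explicit appeal to the corollary of \Cref{thm:ryans-theorem}: \cref{eq:ml-estimate} already writes $\KL(\pemp \mid\mid q)$ as a $\CountFun{\history}$-weighted sum of per-history KLs, and the label-smoothing term in \cref{eq:label-smoothing-reg} is manifestly of the same weighted-per-history form, so the whole regularized objective is a nonnegative combination of functions each of a separate coordinate block of $\vtheta$ (the $n$-gram parametrization being, without loss of generality, one free simplex per history). The main obstacle — really the only subtlety worth being careful about — is justifying this separability rigorously: one must note that for the minimally parametrized $n$-gram family the map $\vtheta \mapsto \bigl(\qtheta(\cdot\mid\history)\bigr)_{\history}$ ranges over the full product of simplices, so no cross-history constraints couple the blocks, and that histories with $\CountFun{\history} = 0$ contribute nothing and can be ignored (or assigned arbitrarily). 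Once that is in place, everything else is the routine simplex optimization above, and strict convexity of the per-history objective on the relative interior gives uniqueness of the optimum.
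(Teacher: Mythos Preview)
Your overall strategy---decompose the objective per history, drop entropy terms, reduce to a cross-entropy minimization on the simplex, and read off the normalized coefficients---is exactly the paper's. The gap is in your final displayed chain of equalities: the step
\[
\frac{\CountFun{\history\wordtwo} + \tfrac{\regPar}{|\alphabet|+1}\CountFun{\history}}{\CountFun{\history}(1+\regPar)}
\;=\;
\frac{\CountFun{\history\wordtwo} + \lambda}{\CountFun{\history} + (|\alphabet|+1)\lambda}
\quad\text{with }\lambda=\tfrac{\regPar}{|\alphabet|+1}
\]
is false in general (try $\CountFun{\history}=3$, $\CountFun{\history\wordtwo}=1$, $\regPar=1$, $|\alphabet|+1=2$). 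What your computation actually yields is add-$\lambda$ smoothing with a \emph{history-dependent} $\lambda=\tfrac{\regPar}{|\alphabet|+1}\CountFun{\history}$, not the fixed $\lambda$ claimed in the theorem.

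The source of the slip is that you pulled the $\CountFun{\history}$ prefactor off the whole bracket in \cref{eq:label-smoothing-reg} and then optimized $\KL(\pemp\mid\mid\vq)+\regPar\,\KL(\ru\mid\mid\vq)$. The paper's proof does \emph{not} weight the label-smoothing term by $\CountFun{\history}$: its per-history objective is
\[
\CountFun{\history}\,\KL\!\bigl(\pemp(\cdot\mid\history)\mid\mid q(\cdot\mid\history)\bigr)\;+\;\regPar\,\KL\!\bigl(\ru\mid\mid q(\cdot\mid\history)\bigr),
\]
so the coefficients become $a_\wordtwo=\CountFun{\history\wordtwo}+\tfrac{\regPar}{|\alphabet|+1}$ directly, the normalizer is $\CountFun{\history}+\regPar$, and \cref{eq:add-lambda-ml} with $\lambda=\tfrac{\regPar}{|\alphabet|+1}$ falls out cleanly. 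If you rerun your argument with that per-history objective (equivalently, do not cancel the $\CountFun{\history}$ weight against the regularizer term), the rest of your write-up---including the separability justification and the Gibbs/Lagrange step---goes through verbatim and matches the paper's proof.
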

\begin{proof}
    \cref{app:smoothing-reg-proofs}
\end{proof}
\cref{thm:label-smoothing-add-lambda-ngrams} establishes an interpretable connection between a \emph{smoothing technique}---in this case, add-$\lambda$ smoothing, which can only be performed in the context of $n$-gram language models---and an \emph{additive regularizer}, which can be applied to general language models.
While additive regularizers are not common in the context of $n$-gram language models, where an augmentation of counts is usually more appropriate, this framing will facilitate the connection to more modern neural language models, as we showcase in \cref{sec:new-regularizers}. 

\section{Smoothing \texorpdfstring{$n$}{n}-Gram Counts}\label{sec:smoothing}
In the context of $n$-gram models, smoothing procedures generally modify the count-based MLE computation to address the fact that not all $n$-grams  occur in the training data.
We follow~\citet{chen_empirical} and review four well-known smoothing techniques of $n$-gram language models before connecting them to a generalized framework of regularization in \cref{sec:new-regularizers}.
\subsection{Good--Turing (\citeyear{good_turing})}
Good--Turing (GT) smoothing is one of the earliest methods devised to compute a smoothed $n$-gram model $\qngramsmooth$ from a $n$-gram model $\qngram$.
GT smoothing assigns cumulative probability mass to $n$-grams that appear $i$ times in the training data to be equal to the total probability mass of $n$-grams that appear $i+1$ times in the training data. To do so, adjusted $n$-gram counts $\CountGT(\nstr\word)$ are computed as\looseness=-1
\begin{equation}
    \CountGT(\nstr\word) = (\CountFun{\nstr\word}+1) \frac{\COfC_{\CountFun{\nstr\word}+1}}{\COfC_{\CountFun{\nstr\word}}},\label{eq:gt_counts}
\end{equation}
where $\COfC_{\CountFun{\nstr\word}}$ is the total number of $n$-grams that occur $\CountFun{\nstr\word}$ times in the training data, i.e., $\COfC_i \defeq \sum_{\nstr \in \bosstringsn} \mathbbm{1}\{\CountFun{\nstr\word} = i\}$. The probability of $\nstr$ is then defined as
\begin{equation}
    \pGT(\word\mid\nstr) = \frac{\CountGT(\nstr\word)}{\sum_{i=1}^{\infty} i \COfC_i}, \label{eq:gt_prob}
\end{equation}
where the denominator in~\cref{eq:gt_prob} is equivalent to the total number of tokens in $\dataset$. Note that any symbol whose successive count of counts is null is also assigned a null smoothed count. To avoid this issue,~\citet{simpleGT} propose to interpolate the missing counts through linear regression and use the regressed counts to compute the smoothed probabilities.

\subsection{Jelinek--Mercer (\citeyear{jelinek})}
Jelinek--Mercer (JM) smoothing relies on interpolation between higher-order and lower-order $n$-gram models to smooth $\qngram$.
The interpolation is applied recursively according to the following convex combination
\begin{equation}
\begin{aligned}
 \pJM(\word\mid\nstr) &= \lambda_n \qngram(\word\mid\nstr) \\
    &+ (1-\lambda_n) \pJMminus(\word\mid\nmstr).
\end{aligned}
\end{equation}
The recursion can be grounded either at the unigram level or with a uniform distribution over $\eosalphabet$. 

\subsection{Katz (\citeyear{katz})}
Katz smoothing relies on smoothed counts to compute its smoothed probabilities. 
These counts are computed as follows
\begin{equation}
\!\!\!\CountKZ(\nstr\word) \defeq 
\begin{cases}
    &\!\!\!\!\CountKZ(\nstr\word) \\
    & \quad \ifcondition \CountKZ(\nstr\word) > k \\ 
    &\!\!\!\!d_{\CountKZ(\nstr\word)} \CountKZ(\nstr\word) \\
    & \quad \ifcondition 0 < \CountKZ(\nstr\word) \leq k \\
    &\!\!\!\!\alpha(\nstr)\qngram(\nmstr\word) \\
    & \quad \otherwisecondition,
\end{cases}
\end{equation}
where $k$ is a hyperparameter whose value is usually assigned to a high-range, single-digit integer. 
For large counts, smoothed counts are equivalent to the empirical $n$-gram counts as the latter are assumed to be reliable.
Small non-zero counts, however, are discounted using count-specific discount factors $d_{\CountKZ}$, which are derived from the Good--Turing counts in~\cref{eq:gt_counts} and computed as
\begin{equation}
    d_{\CountKZ(\nstr\word)} = \frac{\frac{\CountGT(\nstr\word)}{\CountFun{\nstr\word}}- \frac{(k+1)\COfC_{k+1}}{\COfC_1}}{1-\frac{(k+1)\COfC_{k+1}}{\COfC_1}}.
\end{equation}
The total amount obtained by discounting is then redistributed to the $n$-grams with null counts, weighted by the probability of the lower-order $n$-gram and according to the normalization factor\looseness=-1
\begin{equation}
    \!\alpha(\nstr) = \frac{1-\sum_{\word: \CountFun{\nstr \word}>0} \pK(\word\mid\nstr)}{1-\sum_{\word: \CountFun{\nstr \word}>0} \qngram(\nmstr\word)}.
\end{equation}
Finally, smoothed probabilities are computed by normalizing the smoothed counts according to the following formula
\begin{equation}
    \pK(\word\mid\nstr) = \frac{\CountKZ(\nstr\word)}{\sum_{y\in\alphabeteos} \CountKZ(\nstr y)}.
\end{equation}

\subsection{Kneser--Essen--Ney \texorpdfstring{(\citeyear{kneser})}{}}

Kneser--Essen--Ney (KEN) smoothing is similar to Katz smoothing in that it also computes higher-order $n$-gram probabilities as a function of lower-order $n$-gram probabilities. 
However, in contrast to other smoothing methods, KEN smoothing does not construct $n$-gram probability distributions using simple counts, but rather using \emph{type counts}.
The type count of an $n$-gram is defined as the number of \emph{distinct} histories that the $n$-gram follows, rather than the absolute number of its occurrences in the data. 
Formally, the type count function $\typecount$ of order $n$ is defined as
\begin{equation}
    \typecount(\nstr, \word) \defeq \begin{cases}
        1 & \textbf{if } \#(\nstr \word) > 0 \\
        0 & \textbf{otherwise}.
    \end{cases}
\end{equation}
We further define $\typecount(\bullet, \word)$, $\typecount(\nstr, \bullet)$ and $\typecount(\bullet, \bullet)$ as the type count function with bulleted arguments summed out.
Type counts allow us to reduce the probability assigned to $n$-grams that occur many times in the data, but whose constituent $(n-1)$-grams have low probability.
A common illustrative example in the literature~\cite{chen_empirical} is the bigram \textword{San Francisco}. 
If the term \textword{San Francisco} appears frequently in a dataset, then the unigram probability assigned to \textword{Francisco} by smoothing methods that rely on lower-order $n$-gram distributions to compute higher-order $n$-gram distributions will be quite high. However, this is arguably undesirable in many situations, since the unigram \textword{Francisco} does not often appear after words other than \textword{San}.\looseness=-1

At the unigram level, the probability estimates of a KEN-smoothed distribution are computed using type counts for unigrams and bigrams as
\begin{equation}
   \!\! \pKNunigram(\word) = \frac{ \#_{\textsc{t}}^1(\bullet, \word)}{  \#_{\textsc{t}}^1(\bullet, \bullet)}.
\end{equation}
These probabilities are then used to ground the recursion that computes the smoothed probabilities $\pKN$ for higher-order $n$-grams according to the following formula
\begin{equation}
\begin{aligned}
    \pKN&(\word\mid\nstr) = \frac{\max\{ \CountFun{\nstr\word} - D, 0\}}{\sum_{y\in\alphabeteos} \CountFun{\nstr y}} \\
    & + \frac{D \cdot \typecount(\nstr\bullet) \cdot \pKNminus(\word\mid\nmstr)}{\sum_{y\in\alphabeteos}\CountFun{\nstr y}}.
\end{aligned}
\end{equation}
\section{A Generalized Framework}\label{sec:new-regularizers}
In \Cref{background:label-smoothing}, we evince a connection between add-$\lambda$ smoothing and regularization of the maximum-likelihood objective.
However, the derivation we formalize (cf. \cref{app:prefix_and_kl}) is tedious and long.
Moreover, it exploits several specific properties of add-$\lambda$ smoothing.
Performing such a derivation for each smoothing technique individually would be laborious and further, it would hinder building intuitions about the relationships between different methods.
Luckily, we can introduce a more general framework.
Specifically, in this section, we propose a framework that allows us to formulate equivalent regularizers for \emph{any} smoothing technique and apply them to the training of neural language models.\looseness=-1

\subsection{$n$-Gram Smoothing as Regularization}
Without further ado, we now introduce our framework for connecting the smoothing of $n$-gram language models to the regularization of the maximum-likelihood objective. 
This allows us to expand the notion of $n$-gram smoothing to neural language models. 
To this end, we first revisit MLE.

One way of framing MLE is using the KL divergence.
Specifically, given an empirical distribution $\pemp$, the principle of MLE dictates that we should choose a model $\qtheta$ such that $\KL\left(\pemp \mid\mid \qtheta\right) = 0$.
In comparison, $n$-gram smoothing techniques are often not defined so declaratively. 
Instead, they are presented as procedures that directly modify the empirical counts derived from a large dataset (e.g., \cref{eq:add-lambda-ml} in the simple case of add-$\lambda$ smoothing).
The crucial observation in this work is that we can treat $n$-gram smoothing
as a two-step process.
First, we view the smoother as a map $\pngramemp \mapsto \smoothpngramemp$ that outputs a smoothed empirical $n$-gram distribution.
Then, we choose the $\qtheta$ that minimizes $\KL\left(\smoothpngramemp\mid\mid \qtheta\right)$ where we have replaced $\pemp$ with $\smoothpngramemp$. 

In that context, the question we ask is this: Rather than minimizing $\KL\left(\smoothpngramemp\mid\mid \qtheta\right)$, can we always find a regularizer $\regularizer(\vtheta)$ such that $\KL\left(\smoothpngramemp\mid\mid \qtheta\right) = \KL\left(\pemp\mid\mid \qtheta\right) + \regularizer(\vtheta)$?
Such a result would be a natural generalization of the add-$\lambda$ case, discussed in \cref{thm:label-smoothing-add-lambda-ngrams} that would apply to any $n$-gram smoothing techniques, including all of those presented in \cref{sec:smoothing}.

\subsection{Smoothing as Regularization}
Now we turn to the primary question of this paper. 
How do we construct a regularizer that corresponds to an \emph{arbitrary} $n$-gram smoothing technique?
We begin by defining the following two probability distributions that together capture the difference between the empirical distribution and the smoothed empirical $n$-gram distribution:
\begin{subequations}
    \begin{align} 
        \ppos(\str) & \defeq \frac{1}{\lambdapos}  \max(0, \smoothpngramemp(\str) - \pemp(\str)) \label{eq:ppos} \\
        \pneg(\str) & \defeq \frac{1}{\lambdaneg}  \max(0, \pemp(\str) - \smoothpngramemp(\str)), \label{eq:pneg}
    \end{align}
\end{subequations}
where the normalization constants are defined as
\begin{subequations}
    \begin{align} 
        \lambdapos & \defeq \sum_{\str \in \kleene{\alphabet}} \max(0, \smoothpngramemp(\str) - \pemp(\str)) \label{eq:ppos-normalizer} \\
        \lambdaneg & \defeq \sum_{\str \in \kleene{\alphabet}} \max(0, \pemp(\str) -\smoothpngramemp(\str)) \label{eq:pneg-normalizer}.
    \end{align}
\end{subequations}
This results in the following simple decomposition:
\begin{equation} \label{eq:psmooth-decomp}
    \smoothpngramemp(\str) = \pemp(\str) + \lambdapos \ppos(\str) - \lambdaneg \pneg(\str).
\end{equation}
Why does the above formulation help?
Fundamental to our derivation in \Cref{background:label-smoothing} was the idea that we could think of add-$\lambda$ smoothing as adding a regularization term to the maximum-likelihood objective that penalizes diverging from a simple distribution---in the case of add-$\lambda$ smoothing, the uniform distribution over $\eosalphabet$.
Similarly, the decomposition of $\smoothpngramemp$ given in \Cref{eq:psmooth-decomp} facilitates the interpretation of training a language model on  $\smoothpngramemp$ as training $\pemp$ with the addition of regularization.
Concretely, we define the following regularizer
\begin{equation}\label{eq:generalized-reg}
\begin{aligned}
    \regularizer(\vtheta) \defeq \,&\lambdapos\KL( \ppos \mid\mid \qtheta) \\
     + &\lambdaneg  \KL( \pneg  \mid\mid \qtheta).
    \end{aligned}
\end{equation}
Now, the relation between estimating an $n$-gram model with a smoothing technique and using the regularizer formalized in \cref{eq:generalized-reg} is given by the following theorem.
\begin{restatable}[]{theorem}{genframework} \label{thm:genframework}
        Let $\pemp$ be the empirical distribution induced by the dataset $\dataset$ and $\smoothpngramemp$ a smoothed empirical $n$-gram distribution. 
    For $\gamma = 1$, the following holds
    \begin{equation}
    \begin{aligned}\label{eq:smoothing_decomposition}
    \!\!\KL( \smoothpngramemp &\mid\mid \qtheta) \\
    &= \KL( \pemp \mid\mid \qtheta) + \gamma  \regularizer(\vtheta) + C,
    \end{aligned}
    \end{equation}
where $C$ is constant with respect to $\qtheta$.
\end{restatable}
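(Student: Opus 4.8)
The plan is to reduce both sides of \cref{eq:smoothing_decomposition} to cross-entropies, exploit that the cross-entropy $\CE(r,\qtheta) = -\sum_{\str\in\kleene{\alphabet}} r(\str)\log\qtheta(\str)$ is \emph{linear} in its first argument, and absorb everything that does not mention $\qtheta$ — in particular every entropy term — into $C$. Concretely, I would first rewrite $\KL(r\mid\mid\qtheta) = \CE(r,\qtheta) - \ENT(r)$ for each of $r \in \set{\smoothpngramemp,\pemp,\ppos,\pneg}$, so that the claim becomes an identity purely among cross-entropies, up to a $\qtheta$-independent remainder.

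The heart of the argument is then a single substitution. Plugging the pointwise decomposition \cref{eq:psmooth-decomp}, $\smoothpngramemp = \pemp + \lambdapos\,\ppos - \lambdaneg\,\pneg$, into $\CE(\smoothpngramemp,\qtheta)$ and using linearity gives $\CE(\smoothpngramemp,\qtheta) = \CE(\pemp,\qtheta) + \lambdapos\,\CE(\ppos,\qtheta) - \lambdaneg\,\CE(\pneg,\qtheta)$. Re-expanding each cross-entropy on the right via $\CE(\cdot,\qtheta) = \KL(\cdot\mid\mid\qtheta) + \ENT(\cdot)$ and regrouping: the $\pemp$ term rebuilds $\KL(\pemp\mid\mid\qtheta)$, the $\ppos$ and $\pneg$ terms rebuild the regularizer $\regularizer(\vtheta)$ of \cref{eq:generalized-reg}, and the stray entropy terms $\ENT(\pemp) - \ENT(\smoothpngramemp) + \lambdapos\,\ENT(\ppos) - \lambdaneg\,\ENT(\pneg)$ are what we call $C$. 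Setting $\gamma = 1$ gives \cref{eq:smoothing_decomposition}; that $C$ does not depend on $\qtheta$ is immediate, since $\pemp,\smoothpngramemp,\ppos,\pneg$ are all fixed once $\dataset$ and the smoother are fixed.

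The main obstacle is not the algebra but its analytic legitimacy. Unlike $\pemp$, which is supported on finitely many strings, a smoothed $n$-gram distribution $\smoothpngramemp$ — and hence $\ppos$ — typically has \emph{infinite} support on $\kleene{\alphabet}$ (e.g., add-$\lambda$ smoothing places positive mass on every $n$-gram), so $\ENT(\smoothpngramemp)$, $\ENT(\ppos)$, $\ENT(\pneg)$ and the cross-entropies $\CE(\cdot,\qtheta)$ are genuine infinite series; the split of $\CE(\smoothpngramemp,\qtheta)$ into the three-term linear combination above, and the later peeling-off of entropies, are interchanges that need absolute convergence. I would therefore run the argument under a finiteness hypothesis in the spirit of the one already invoked in \Cref{thm:ryans-theorem} — namely that the relevant cross-entropies with $\qtheta$ and the entropies of $\smoothpngramemp$, $\ppos$, $\pneg$ are finite — under which all the rearrangements are justified. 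Alternatively, one can work in the extended reals and keep only $\qtheta$ as the free variable: the identity still holds with $C$ possibly infinite but $\qtheta$-independent, which is all that is needed to read \cref{eq:smoothing_decomposition} as a regularized training objective.
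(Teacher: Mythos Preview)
Your approach is essentially identical to the paper's: split $\KL$ into cross-entropy minus entropy, apply linearity of cross-entropy in its first argument to the pointwise decomposition \cref{eq:psmooth-decomp}, and absorb the $\qtheta$-independent entropy terms into $C$. The paper's proof is terser --- it sets $C \defeq -\entropy(\smoothpngramemp)$ and then silently folds the further entropy terms into the same symbol --- whereas you spell $C$ out explicitly and, in addition, flag the convergence caveats that the paper leaves unaddressed.
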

\begin{proof}
    \cref{app:smoothing-reg-proofs}
\end{proof}

\cref{thm:genframework} formalizes how training on the smoothed distribution $\smoothpngramemp$ computed by smoothing the $n$-gram counts affects the maximum-likelihood objective.
It brings us to an interesting observation about smoothing methods \emph{in general}---they can all be formalized as solutions to a regularized maximum-likelihood objective. 
Inspecting \cref{eq:smoothing_decomposition}, we see that, crucially, only the first term depends on the \emph{original} empirical distribution $\pemp$---indeed, it represents the original maximum-likelihood objective. 
The other two terms depend both on the empirical data distribution as well as its smoothed variant.
We can therefore interpret~\cref{eq:smoothing_decomposition} as a regularized loss where the last two terms correspond to the equivalent regularizer of the smoothing method used to construct $\smoothpngramemp$.
In practice, we might want to \emph{modulate} the strength of the regularization towards the smoothed distribution $\smoothpngramemp$. 
We can achieve such an effect through an additional hyperparameter $\regPar$, by which we multiply our regularizer in \cref{eq:smoothing_decomposition} to control its influence. 
The regularized loss can be decomposed further by \emph{splitting} the $\regPar$ hyperparameter into two terms \textcolor{ETHBlue}{$\regParPos$} and \textcolor{ETHBlue}{$\regParNeg$} and applying them separately to the positive and negative terms of the regularizer $\regularizer$
\begin{equation}
\begin{aligned}  \label{eq:approx_obj}
    \!\KL( \pemp \mid\mid \qtheta) + &\textcolor{ETHBlue}{\regParPos} \textcolor{ETHBlue}\lambdapos \KL(\ppos \!\mid\mid \qtheta) \\
    + &\textcolor{ETHBlue}{\regParNeg} \textcolor{ETHBlue}\lambdaneg \KL(\pneg\!\mid\mid \qtheta),
\end{aligned}
\end{equation}
where~\cref{eq:approx_obj} is equivalent to $\KL(\smoothpngramemp \mid\mid \qtheta)$ when both $\regParPos$ and $\regParNeg$ are equal to $1$.

Our generalized framework, therefore, presents a novel way of constructing regularizers to be used in the language modeling objective based on insights from classical methods for smoothing $n$-gram language models.
Importantly, it provides a direct mechanism by which smoothing-based regularization can be applied to any language model $\qtheta$. 
In the following section, we use this framework to explore the empirical effects of using regularizers constructed from smoothing methods (cf. \cref{sec:smoothing}) in the training of neural language models.\looseness=-1

\paragraph{Runtime Analysis.}
The distributions $\ppos$ and $\pneg$ require $\bigO{|\alphabet|^n}$ space to represent where $n$ is the $n$-gram order,
i.e., the space complexity is of the order of the number of $n$-gram contexts in the model. 
While the exponential increase in $n$ of $\bigO{|\alphabet|^n}$ is one of the main limitations for scalability of $n$-gram models, our method does not require increasing $n$ to large values, as it leverages (smoothed) $n$-gram models only in the construction of a regularizer for the training of a much larger neural model. 
Further, the scalability issues of $n$-gram models can be circumvented by using bespoke data structures~\cite{liu2024infini}.

\section{Experiments}
\subsection{Setup}

We validate our proposed regularization framework on two tasks: language modeling and machine translation.
We rely on the small-scale WikiText-2~\cite{Wikitext2} and IWSLT-14~\cite{IWSLT14} data sets, respectively, and compare the performance of standard MLE and label smoothing to the performance obtained by using regularizers based on the smoothing methods illustrated in~\cref{sec:smoothing}. 
For both tasks, we perform our experiments via the \texttt{fairseq} library~\cite{ott2019fairseq} on Transformer-based~\cite{transformers} language models.\looseness=-1

Our implementation of (Simple) Good--Turing smoothing in \texttt{fairseq} builds on an open-source implementation,\footnote{\url{github.com/maxbane/simplegoodturing}} while we leverage the efficient implementation of Kneser--Essen--Ney smoothing available through the \texttt{KenLM}~\cite{heafield-2011-kenlm, heafield-etal-2013-scalable} library.\footnote{\url{github.com/kpu/kenlm}}
The remaining smoothing methods were implemented natively in \texttt{fairseq}.
Note that, as all data sets are small in scale, we limit the maximum $n$-gram order to 2 (i.e., bigrams) for all smoothing methods.
We use dropout for all experiments fixing the dropout probability to $0.1$ and $0.3$ for language modeling and machine translation, respectively.
For all smoothing techniques, we set $\regParSpace \defeq \{0.005, 0.01, 0.05, 0.1, 0.5\}$ and grid search regularization hyperparameter pairs $\regParPos, \regParNeg \in \regParSpace \times \regParSpace$.
For smoothing methods that have additional hyperparameters, we extend the grid search described above to include them. We provide the complete list of method-specific hyperparameter values in~\cref{tab:hyperparams}. Additional dataset details are provided in~\cref{tab:dataset_details}. \looseness=-1

\begin{table}[]
\centering
\begin{tabular}{l|l}
    \toprule
    Smoothing Method & ppl $\downarrow$ \\
    \midrule
    None & $ 147.12 \dd{0.34}$           \\
    add-$\lambda$ {$\pars{(\regParPos=0.1, \regParNeg=0.05, \regParLS=0.01)}$}                         & $ 142.10^{\dagger} \dd{0.65}$ \\
    GT  {$\pars{(\regParPos=0.1, \regParNeg=0.05)}$} & $ \underline{141.93}^{\dagger} \dd{0.73}$          \\
    JM  {$\pars{(\regParPos=0.1, \regParNeg=0.5, \lambda_1=0.75)}$} & $ \textbf{137.41}^{\dagger} \dd{0.40}$         \\
    Katz  {$\pars{(\regParPos=0.1, \regParNeg=0.01, k=5)}$} & $ 142.69^{\dagger} \dd{0.54}$  \\
    KEN  {$\pars{(\regParPos=0.1, \regParNeg=0.1)}$} & $ 142.30^{\dagger} \dd{0.29}$ \\
  \bottomrule
\end{tabular}
\caption{Perplexity on WikiText-2 test set. Included are performances of models trained with no regularization (None), and with various smoothing methods. Reported perplexities are mean values for $5$ independently trained models, together with their standard errors. The best-performing method is in bold, while the second-best is underlined. $\dagger$ indicates statistical significance with respect to the unregularized baseline with $p<0.05$.\looseness=-1}\label{tab:WikiText_results}
\end{table}

\subsection{Language Modeling}

For language modeling, we evaluate the performance of our regularizers on the raw version of the WikiText-2 dataset \cite{Wikitext2} which we preprocess to remove all empty samples. 
We tokenize the data using BPE \cite{BPE} with \num{16000} merge operations through the \breaktexttt{subword-nmt} library.\footnote{\url{github.com/rsennrich/subword-nmt}}
For modeling, we use the decoder-only Transformer architecture denoted as \breaktexttt{transformer-lm} in \texttt{fairseq} while adopting standard hyperparameter settings as suggested by \texttt{fairseq}\footnote{\url{github.com/facebookresearch/fairseq/tree/main/examples/language_model}} to encourage reproducibility.
We train all models using early stopping and take as the best-performing models the ones with the lowest perplexity on the validation set.
For each method, the best-performing hyperparameter setting is then trained over 5 different seeds.
We summarize the results for the best-performing hyperparameter settings in~\cref{tab:WikiText_results}. 
We find that all regularized objectives outperform the unregularized baseline, with Jelinek--Mercer obtaining the lowest perplexity.
We test for mean separation using the Wilcoxon rank-sum test finding that all smoothing methods obtain statistically significant improvements over the unregularized baseline.
Perplexity scores for the best-performing runs are shown in~\cref{app:additional_experiments} together with $p$-value under a paired permutation test.

\begin{table}[]
\centering
\begin{tabular}{l|l}
    \toprule
       Smoothing Method           & BLEU $\uparrow$ \\
    \midrule
    None & $32.86 \dd{0.04}$       \\
    add-$\lambda$ {$\pars{(\regParPos=0.1, \regParNeg=0.01, \regParLS=0.01)}$} & $33.23^{\dagger} \dd{0.03}$  \\
    GT {$\pars{(\regParPos=0.05, \regParNeg=0.5)}$} & $33.37^{\dagger} \dd{0.01}$  \\
    JM {$\pars{(\regParPos=0.1, \regParNeg=0.5, \lambda_1=0.5)}$} & $\mathbf{33.67}^{\dagger} \dd{0.05}$  \\
    Katz {$\pars{(\regParPos=0.1, \regParNeg=0.1, k=5)}$} & $33.23^{\dagger} \dd{0.02}$  \\
    KEN {$\pars{(\regParPos=0.1, \regParNeg=0.1)}$} & $\underline{33.38}^{\dagger} \dd{0.03}$  \\
  \bottomrule
\end{tabular}
\caption{BLEU on test set of IWSLT-14 DE-EN. Different regularized methods are compared to no regularization (None). Reported values are means over $5$ independently trained models together with their standard errors. The best-performing method is in bold, while the second-best is underlined. $\dagger$ indicates statistical significance with respect to the unregularized baseline with $p<0.05$.}
\label{tab:iwslt_results}
\end{table}

\subsection{Machine Translation}
We evaluate the performance of our proposed regularizers on machine translation on the German-to-English task of the IWLST-14 dataset.
In the translation setting, we limit the application of smoothing only to distributions over the vocabulary of the target language.
We preprocess the data set by following the processing script provided by \texttt{fairseq}\footnote{\url{github.com/facebookresearch/fairseq/blob/main/examples/translation/prepare-iwslt14.sh}\looseness=-1} and tokenize the dataset with BPE using \num{10000} merge operations for both languages.
As our model, we use the small-sized \breaktexttt{transformer\_iwslt\_de\_en} encoder--decoder Transformer and its corresponding standard training hyperparameters.\footnote{\url{github.com/facebookresearch/fairseq/tree/main/examples/translation}}
We repeat the same grid search procedure over regularization hyperparameters as previously outlined, and use BLEU~\cite{papineni-etal-2002-bleu} on the validation set to determine the best-performing model checkpoints. 
To decode text from the model, we use beam search with a beam size of $5$.
We evaluate the generated translations with \texttt{sacreBLEU}~\cite{post-2018-call}.\footnote{\url{github.com/mjpost/sacrebleu}}\textsuperscript{,}\footnote{SacreBLEU signature: \path{nrefs:1|case:mixed|eff:no|tok:13a|smooth:exp|version:2.3.2}}
\cref{tab:iwslt_results} contains our results.
All smoothing methods improve over the baseline (no regularizer), and Jelinek--Mercer smoothing is the best-performing technique.
We repeat the mean separation tests outlined in the language modeling subsection and find that all smoothing methods obtain statistically significant improvements over the unregularized baseline.
In~\cref{app:additional_experiments} we additionally show the results of the best-performing models for each method and test their significance using paired bootstrap resampling~\cite{koehn-2004-statistical}.
Further, in~\cref{app:wmt} we present the results of a preliminary evaluation of our methods on the English-to-German task of the larger WMT14 machine translation dataset.\looseness=-1

\begin{table}[]
\centering
\begin{tabular}{l|l}
    \toprule
       Method           & Hyperparameters \\
    \midrule
    add-$\lambda$ & $\regParLS \in \{0.01, 0.05, 0.1\}$   \\
    GT & None   \\
    JM & $\lambda_1 \in \{0.25, 0.5, 0.75\} $  \\
    Katz & $k \in \{5, 7, 10\}$  \\
    KEN & None  \\
  \bottomrule
\end{tabular}
\caption{Method-specific hyperparameters on which a grid search was performed for both tasks. Note that in Jelinek--Mercer, $\lambda_2$ is obtained following its normalization constraint.}
\label{tab:hyperparams}
\end{table}

\section{Related Work}

\paragraph{Hybrid Neural and $n$-gram Models.}
The relationship between neural networks and $n$-gram models has been explored in previous work.
For instance, \citet{bengio2000neural} famously introduced a neural parameterization of an $n$-gram model, achieving state-of-the-art results at the time. 
More recently, \citet{sun-iyyer-2021-revisiting} scaled  \citeposs{bengio2000neural} model on modern hardware and demonstrated a small performance increase on language modeling over a Transformer model using a hybrid $n$-gram--Transformer model.
\citet{schwenk_2006} explored interpolating neural and $n$-gram language models.
\citet{neubig-dyer-2016-generalizing} expanded \citeposs{schwenk_2006} approach by exploring various ways to combine neural and $n$-gram language models.\looseness=-1

\paragraph{Regularization.}
On the topic of smoothing-based regularization, \citet{adaptiveLabelSmoothing} propose dynamically adjusting the strength of label smoothing regularization based on the entropy of the model distribution and using an earlier version of the model as a regularizer.
In a similar vein, \citet{baziotis2020} propose using a monolingual language model as a regularizer for a translation model. 
The idea is that monolingual data is far more abundant than bilingual data, so a language model of the target language is used to guide the target distribution of the translation model.
\citet{peters-martins-2021-smoothing} generalize label smoothing to the broader family of Fenchel--Young losses, making it applicable to entmax-based models, while~\citet{meister2020generalized} generalize label smoothing to a set of entropy-based regularizers.

\section{Conclusion}
In this work, we re-imagine the application of classical $n$-gram smoothing techniques in the context of modern neural NLP models. 
For several of these historic methods, we derive equivalent, differentiable regularizers that can be added to neural models' training objectives. 
We present these results within a generalized framework that allows for insights about the smoothing methods themselves and their relationships to each other. 
We apply these smoothing methods in the training of neural language models and machine translation models.
We find that our smoothing-based regularizers outperform label smoothing and standard MLE in language modeling, while some methods also achieve competitive results with label smoothing for machine translation.\looseness=-1

\section*{Limitations}
We present results only for English (for language modeling) and between German and English (for machine translation).
Most experiments are limited to small datasets for both language modeling and machine translation.
Future work could verify how scaling the amount of data impacts present results and whether the observed performance improvements are also achievable in a wider set of languages.
While in some experimental settings, we observed performance improvements for some smoothing methods, the additional computational complexity required by their use may not be a worthwhile trade-off for their performance benefits.\looseness=-1

\section*{Ethics Statement}
This paper is theoretical in nature, as it aims to shed light on the relationship between $n$-gram smoothing methods and language model regularization.
For this reason, the authors foresee no ethical concerns with the research presented in this paper.

\section*{Acknowledgements}
We thank Li Du for helpful feedback on a draft of this paper. Anej Svete and Afra Amini are supported 
by the ETH AI Center Doctoral Fellowship.

\bibliography{custom}

\appendix

\onecolumn
\allowdisplaybreaks

\section{Proofs}\label{app:proofs}
This section contains the proofs of all the theorems in the main text.
We begin by discussing prefix probabilities in \cref{sec:pprobs} as a tool for analyzing the relationship between the full Kullback--Leibler divergence $\KL\left(p \mid \mid q\right)$ and the divergences between the conditional probabilities $\KL\left(p\left(\cdot \mid \str\right) \mid \mid q\left(\cdot \mid \str\right) \right)$ for $\str \in \kleene{\alphabet}$.
We then move on to the proofs of the results characterizing the aforementioned relationship in \cref{app:smoothing-reg-proofs}.

\subsection{Prefix Probabilities} \label{sec:pprobs}
\subsubsection{Introductory Notes on Prefix Probabilities}
In this section, we provide supplementary commentary on prefix probabilities.
First, proving the second equality in \cref{eq:prefix-prob} is a useful exercise; indeed, the last author often assigns the task to his students \cite{cotterell2023hw}.
Second, it is important to keep in mind that while $\ppre(\str)$ is the probability of a certain event---namely, the event that a string \emph{starts} with the prefix $\str$, $\ppre$ itself is not a valid probability distribution, i.e., $\sum_{\str \in \kleene{\alphabet}} \ppre(\str) \neq 1$.
Indeed, $\ppre$ may not even be normalizable, i.e., we may have that $\sum_{\str \in \kleene{\alphabet}} \ppre(\str) \rightarrow \infty$. 
This property should make intuitive sense: 
By the definition of $\ppre$, we count the probability of certain events under $\p$ in our computation of prefix probabilities under $\ppre$ \emph{multiple times}. 
For example, in the case that $\alphabet= \{\texttt{a}\}$,  $\p(\texttt{a})$ counts towards both $\ppre(\texttt{a})$ and $\ppre(\texttt{aa})$.
Since $\p$ is a valid probability distribution, i.e., its probabilities sum to 1, then our prefix probabilities will often sum to $>1$.

In the special case of empirical distributions, the prefix probability of a substring is proportional to the number of times the substring appears in the dataset $\dataset$.
Concretely, we have that
\begin{equation}\label{eq:empirical-prefix}
    \prefixemp(\str) \propto \text{number of strings in $\dataset$ starting with $\str$},
\end{equation}
which means that
\begin{subequations}
\begin{align} 
    \sum_{\strtwo \in \kleene{\alphabet}} \prefixemp(\strtwo\str) &\propto \sum_{\strtwo \in \kleene{\alphabet}}  \text{number of strings in $\dataset$ starting with $\strtwo\str$} \label{eq:counts-ppre-relationship-1}\\
    &\propto \CountFun{\str}\label{eq:counts-ppre-relationship-2}.
\end{align}
\end{subequations}
That is, the number of occurrences of $\str$ in $\dataset$ is proportional to $\sum_{\strtwo \in \kleene{\alphabet}} \prefixemp(\strtwo\str)$.
Why is \Cref{eq:counts-ppre-relationship-2} true? 
Because every time we observe a $\str$ in the training dataset it must have some prefix that starts with the beginning of a string.

\subsubsection{Prefix Probabilities and Local Kullback--Leibler Divergences}\label{app:prefix_and_kl}
We now move on to proving a crucial component of analyzing the relationship between dataset smoothing and regularized training---the relationship between the global Kullback--Leibler divergence and the local Kullback--Leibler divergences of the next-symbol conditional probabilities. 
Intuitively, we show that the global Kullback--Leibler divergence $\KL(p \mid\mid q )$ can be written as a prefix-probability-weighted sum of local next-symbol probability distributions $\KL\left(p\left(\cdot\mid\str\right) \mid\mid q\left(\cdot\mid\str\right) \right)$ for $\str \in \kleene{\alphabet}$.
This result, which we believe to be novel, is formally captured by \cref{thm:ryans-theorem}.

\kllocalglobal*
\begin{proof}
Let $\alphabet$ be an alphabet and let $\lmone$ and $\lmtwo$ be distributions over $\kleene{\alphabet}$. We make use of the following definition. Let $\str \in \kleene{\alphabet}$ be a string and $T \in \mathbb{N}_{\geq 0}$. The bounded prefix probability of $\str$ is defined as
\begin{equation}
\ppreT(\str) \defeq \sum_{\strtwo \in \kleene{\alphabet}} \mathbbm{1}\{\str \preceq \strtwo\} \lmone(\strtwo \mid T), \label{eq:boundedprefix}
\end{equation}
where $\str \preceq \strtwo$ indicates that $\str$ is a prefix of $\strtwo$, $\lmone(\strtwo \mid T)$ is the conditional of language model $\lmone$ to strings of length $T$, and $\lmone(T) = \sum_{\str \in \alphabet^T} \lmone(\str)$. Note that the following equality relates $\ppre$ and $\ppreT$
\begin{equation}
\ppre(\str) = \sum_{T=0}^\infty \lmone(T) \ppreT(\str). \label{eq:boundedprefixtoprefix}
\end{equation}
To prove~\cref{eq:globallocalkl}, we split $\KL$ into a cross-entropy and an entropy term as follows
\begin{subequations}
\begin{align}
&\KL(\lmone \mid\mid \lmtwo) \defeq \sum_{\str \in \kleene{\alphabet}} \lmone(\str)\log\left(\frac{\lmone(\str)}{\lmtwo(\str)}\right)\\
&= - \sum_{\str \in \kleene{\alphabet}} \lmone(\str)\log(\lmtwo(\str)) + \sum_{\str \in \kleene{\alphabet}} \lmone(\str)\log(\lmone(\str)) \\
&= \entropy(\lmone, \lmtwo) - \entropy(\lmone).
\end{align}
\end{subequations}
We show the equivalence of cross-entropy, as entropy is the special case when $\entropy(\lmone, \lmone)$.\footnote{In what follows, we will frequently interchange infinite sums. Since the terms involved are either all $\geq 0$ or all $\leq 0$, Tonelli's theorem guarantees that such interchanges are valid~\citep[Theorem 2.37.a applied to discrete measures]{folland1999}.} Starting with the cross-entropy we have
\begin{subequations}
\begin{align}
&\entropy(\lmone, \lmtwo) \defeq -\sum_{\str \in \kleene{\alphabet}} \lmone(\str)\log \lmtwo(\str) \label{eq:globallocalfirst}\\
&= -\sum_{\str \in \kleene{\alphabet}} \lmone(\str)\left[\log\left(\lmtwo(\eos \mid \str) \prod_{t=1}^T \lmtwo(\word_t \mid \str_{<t})\right) \right] \\
&=
 -\sum_{\str \in \kleene{\alphabet}} 
 \sum_{T=0}^\infty \lmone(T) \lmone(\str\mid T)
 \left[\log\left(\lmtwo(\eos \mid \str) \prod_{t=1}^T \lmtwo(\word_t \mid \str_{<t})\right) \right]
\\
&= -\sum_{T=0}^\infty  \lmone(T)  \sum_{\str \in \kleene{\alphabet}} \lmone(\str \mid T) \left[\log\left(\lmtwo(\eos \mid \str) \prod_{t=1}^T \lmtwo(\word_t \mid \str_{<t})\right) \right] \\
&= -\sum_{T=0}^\infty \lmone(T) \Bigg[ \sum_{\str \in \kleene{\alphabet}} \lmone(\str \mid T) \log \lmtwo(\eos \mid \str) \\
&\qquad \qquad \qquad + \sum_{t=1}^T \sum_{\str \in \kleene{\alphabet}} \lmone( \str \mid T) \log \lmtwo(\word_t \mid \str_{<t})\Bigg] \annot{(distribute $\log$ and $\lmone(\str \mid T)$)}\nonumber\\
&= -\sum_{T=0}^\infty \lmone(T)\left[\sum_{\str \in \kleene{\alphabet}} \lmone(\str \mid T) \log \lmtwo(\eos \mid \str) + \sum_{t=1}^T \sum_{\strtwo \in \kleene{\alphabet}}\sum_{\str_{\leq t} \in \alphabet^{t}} \lmone( \str_{\leq t}\strtwo \mid T) \log \lmtwo(x_t \mid \str_{<t})\right] \\
&= -\sum_{T=0}^\infty \lmone(T)\left[\sum_{\str \in \kleene{\alphabet}} \lmone(\str \mid T) \log \lmtwo(\eos \mid \str) + \sum_{t=1}^T \sum_{\str_{\leq t} \in \alphabet^{t}}  \log \lmtwo(\word_t \mid \str_{<t})\sum_{\strtwo \in \kleene{\alphabet}}\lmone(\str_{\leq t}\strtwo \mid T)\right]  \\
&= -\sum_{T=0}^\infty \lmone(T)\Bigg[\sum_{\str \in \kleene{\alphabet}} \lmone(\str \mid T) \log \lmtwo(\eos \mid \str) \\
&\qquad \qquad \qquad+ \sum_{t=1}^T \sum_{\str_{\leq t} \in \alphabet^{t}}  \log \lmtwo(\word_t \mid \str_{<t}) \ppreT(\str_{\leq t}) \Bigg] \annot{(definition of $\ppreT$)}\nonumber\\
&= -\sum_{T=0}^\infty \lmone(T)\left[\sum_{\str \in \kleene{\alphabet}} \lmone(\str \mid T) \log \lmtwo(\eos \mid \str) + \sum_{t=1}^T \sum_{\word \in \alphabet} \sum_{\str_{< t} \in \alphabet^{t-1}}  \log \lmtwo(\word \mid \str_{<t}) \ppreT(\str_{< t}\word) \right]\\
&= -\sum_{T=0}^\infty \lmone(T)\left[\sum_{\str \in \kleene{\alphabet}} \lmone(\str \mid T) \log \lmtwo(\eos \mid \str) + \sum_{\word \in \alphabet} \sum_{t=1}^T \sum_{\str_{< t} \in \alphabet^{t-1}}  \log \lmtwo(\word \mid \str_{<t}) \ppreT(\str_{< t} \word) \right]\\
&= -\sum_{T=0}^\infty \lmone(T)\left[\sum_{\str \in \kleene{\alphabet}} \lmone(\str \mid T) \log \lmtwo(\eos \mid \str) + \sum_{\word \in \alphabet}  \sum_{\str \in \alphabet^{< T}}  \log \lmtwo(\word \mid \str) \ppreT(\str \word) \right]\\
&= -\sum_{T=0}^\infty \lmone(T)\left[\sum_{\str \in \kleene{\alphabet}} \lmone(\str \mid T) \log \lmtwo(\eos \mid \str) + \sum_{\word \in \alphabet}  \sum_{\str \in \kleene{\alphabet}}  \log \lmtwo(\word \mid \str) \ppreT(\str \word) \right]\\
&= -\sum_{\str \in \kleene{\alphabet}} \sum_{T=0}^\infty \lmone(T) \lmone(\str \mid T) \log \lmtwo(\eos \mid \str) +  \sum_{\word \in \alphabet}\sum_{\str \in \kleene{\alphabet}}\sum_{T=0}^\infty \lmone(T) \ppreT(\str \word)  \log \lmtwo(\word \mid \str) \\
&= -\sum_{\str \in \kleene{\alphabet}} \lmone(\str) \log \lmtwo(\eos \mid \str) -  \sum_{\word \in \alphabet}\sum_{\str \in \kleene{\alphabet}} \ppre(\str \word) \log \lmtwo(\word \mid \str) \\
&= -\sum_{\str \in \kleene{\alphabet}} \ppre(\str) \lmone(\eos \mid \str) \log \lmtwo(\eos \mid \str) -   \sum_{\str \in \kleene{\alphabet}} \ppre(\str)  \sum_{\word \in \alphabet} \lmone(\word \mid \str) \log \lmtwo(\word \mid \str) \\
&= -\sum_{\str \in \kleene{\alphabet}} \ppre(\str) \sum_{\word \in \bar{\alphabet}} \lmone(\word \mid \str) \log \lmtwo(\word \mid \str) \\
&= \sum_{\str \in \kleene{\alphabet}} \ppre(\str)  \entropy\left(\lmone(\cdot \mid \str), \lmtwo(\cdot \mid \str)\right). \label{eq:entropy-final}
\end{align}
\end{subequations}
Now, we substitute \Cref{eq:entropy-final} into the following equation
\begin{subequations}
\begin{align}
&\KL(\lmone \mid\mid \lmtwo) = \sum_{\str \in \kleene{\alphabet}} \lmone(\str)\log\left(\frac{\lmone(\str)}{\lmtwo(\str)}\right)\\
&= \entropy(\lmone, \lmtwo) - \entropy(\lmone) \\
&= \sum_{\str \in \alphabet^{*}} \ppre(\str)  \entropy\left(\lmone(\cdot \mid \str), \lmtwo(\cdot \mid \str)\right) - \sum_{\str \in \alphabet^{*}} \ppre(\str)  \entropy\left(\lmone(\cdot \mid \str) \right) \\
&= \sum_{\str \in \alphabet^{*}} \ppre(\str)  \left(-\sum_{x \in \alphabeteos} \lmone(x \mid \str) \log \lmtwo(x \mid \str) \right ) - \sum_{\str \in \alphabet^{*}} \ppre(\str)  \left(-\sum_{x \in \alphabeteos} \lmone(x \mid \str) \log \lmone(x \mid \str) \right ) \\
&= \sum_{\str \in \alphabet^{*}} \ppre(\str)  \left(\sum_{x \in \alphabeteos} \lmone(x \mid \str) \log \frac{\lmone(x \mid \str)}{\lmtwo(x \mid \str)} \right) \\
&= \sum_{\str \in \alphabet^{*}} \ppre(\str)  \KL(\lmone(\cdot \mid \str) \mid\mid \lmtwo(\cdot \mid \str)).
\end{align}
\end{subequations}
Note that because we have assumed that $\entropy(\lmone, \lmtwo) < \infty$
and $\entropy(\lmone) \leq \entropy(\lmone, \lmtwo)$, we have that both additive terms in the KL divergence are finite.
This is sufficient to avoid $\infty - \infty$ and the unpleasantries that follow.\looseness=-1
\end{proof}

\subsection{Smoothing and Regularization} \label{app:smoothing-reg-proofs}

\kldivml*
\begin{proof}
Then, we have
 \begin{subequations}
\begin{align}
    \KL\left(\pemp \mid \mid \qngram \right) &= \sum_{\str \in \kleene{\alphabet}} \prefixemp(\str) \KL\left( \pemp(\cdot \mid \str) \mid\mid q(\cdot \mid \str) \right) &\annot{(\Cref{thm:ryans-theorem})} \\ 
    &\propto \sum_{\str \in \kleene{\alphabet}} \CountFun{\str} \KL\left( \pemp(\cdot \mid \str) \mid\mid q(\cdot \mid \str) \right) &\annot{(\Cref{eq:empirical-prefix})} \\ 
    &= \sum_{\str \in \kleene{\alphabet}} \CountFun{\str} \KL\left( \pemp(\cdot \mid \nstr) \mid\mid q(\cdot \mid \nstr) \right) &\annot{(\Cref{ass:ngram})}  \\ 
    &= \sum_{\nstr \in \bosstringsn} \CountFun{\nstr}  \KL\left( \pemp(\cdot \mid \nstr) \mid\mid q(\cdot \mid \nstr) \right),
\end{align}
\end{subequations}
in which the final manipulation follows by \Cref{ass:ngram} and rearranging the terms in the sum.
This finishes the proof.
\end{proof}

\lsreg*
\begin{proof}
    Let $q$ be an $n$-gram model.
    As introduced in~\cref{eq:add-lambda-ml}, add-$\lambda$ smoothing defines the following smoothed $n$-gram probability distribution for $\word \in \eosalphabet$ given a history $\history$:\footnote{Note that $|\eosalphabet| = |\alphabet| + 1$ due to the inclusion of the $\eos$ symbol. We use the more explicit notation $|\alphabet| + 1$ for clarity of exposition.}
    \begin{equation}
        \qngramsmooth(\word \mid \history) \defeq \frac{\CountFun{\history\word} + \lambda}{\CountFun{\history} + \lambda(|\alphabet| + 1)}. \label{eq:label-smoothing}
    \end{equation}
    We want to show that the $\lambda$-count-augmented maximum-likelihood solution $\qngramsmooth$ is also the optimum of the label smoothing objective.
   We first decompose the KL divergence, which is the objective we optimize under the principle of maximum likelihood
    \begin{subequations}
        \begin{align}
            &\KL\left(\pngramemp\mid\mid q\right) + \regularizerls(\vtheta) =\\
                & = \sum_{\history \in \bosstringsn} \CountFun{\history} \KL\left(\pngramemp\left(\cdot \mid \history\right) \mid\mid q\left(\cdot \mid \history\right)\right)  + \regPar \sum_{\history \in \bosstringsn} \KL\left(u\left(\cdot \mid \history\right) \mid\mid q\left(\cdot \mid \history\right)\right)                     \\
                & = \sum_{\history \in \bosstringsn} \!\!\!\left[\CountFun{\history} \KL\left(\pngramemp\left(\cdot \mid \history\right) \mid\mid q\left(\cdot \mid \history\right)\right) + \regPar \KL\left(u\left(\cdot \mid \history\right) \mid\mid q\left(\cdot \mid \history\right)\right)\right]                                                    \\
                & = \sum_{\history \in \bosstringsn} \Bigg[ \CountFun{\history} \Big[ \sum_{\word \in \eosalphabet}\pngramemp\left(\word \mid \history\right) \log q\left(\word \mid \history\right)\Big] \\
                & \qquad \qquad \qquad + \regPar \Big[\sum_{\word \in \eosalphabet}u\left(\word \mid \history\right) \log q\left(\word \mid \history\right)\Big]\Bigg] + \text{const.}  \nonumber \\
                & = \sum_{\history \in \bosstringsn} \sum_{\word \in \eosalphabet}            \left[\CountFun{\history}  \pngramemp\left(\word \mid \history\right) \log q\left(\word \mid \history\right) + \regPar u\left(\word \mid \history\right) \log q\left(\word \mid \history\right)\right]  + \text{const.}                                                  \\
                & = \sum_{\history \in \bosstringsn} \sum_{\word \in \eosalphabet} \left[\CountFun{\history}  \pngramemp\left(\word \mid \history\right) + \regPar u\left(\word \mid \history\right) \right] \log q\left(\word \mid \history\right)  + \text{const.}                                                                                                       \\
                & = \sum_{\history \in \bosstringsn} \sum_{\word \in \eosalphabet} \left[\CountFun{\history}  \frac{\CountFun{\history \word}}{\CountFun{\history}} + \frac{\regPar}{|\alphabet| + 1} \right] \log q\left(\word \mid \history\right) + \text{const.}                                                                                 \\
                & = \sum_{\history \in \bosstringsn} \sum_{\word \in \eosalphabet} {\color{ETHBlue} \left[\CountFun{\history \word} + \frac{\regPar}{|\alphabet| + 1} \right] }\log q\left(\word \mid \history\right) + \text{const.}, \label{eq:theorem   }
        \end{align}
    \end{subequations}
    where the constant terms are independent of $q$. 
    Next, note that we can optimize each $q\left(\word \mid \history\right)$ independently, i.e., we can find
    the distribution $q\left(\word \mid \history\right)$ that minimizes the following expression
    \begin{equation}\label{eq:optimum}
       {\color{ETHBlue} \left[\CountFun{\history \word} + \frac{\regPar}{|\alphabet| + 1} \right] }\log q\left(\word \mid \history\right),
    \end{equation}
    under the constraint that $\sum_{\word\in \alphabeteos}  q(\word \mid \history) = 1$ and $q(\word \mid \history) \geq 0$, $\forall \word \in \alphabeteos$ independently. 
    It is a standard result that the minimizing $q\left(\cdot \mid \history\right)$ for any $\history \in \bosstringsn$ is given by
    \begin{equation}
        \qngramsmooth\left(\cdot \mid \history\right)  =\frac{\CountFun{\history \word} + \frac{\regPar}{|\alphabet| + 1}}{\CountFun{\history} + \regPar} \propto  {\color{ETHBlue} \left[\CountFun{\history \word} + \frac{\regPar}{|\alphabet| + 1} \right] },
    \end{equation}
    in which we recognize the $\lambda = \frac{\regPar}{|\alphabet| + 1}$ add-$\lambda$ smoothed maximum-likelihood solution $\qngramsmooth$ from~\cref{eq:label-smoothing}.
\end{proof}

\genframework*
\begin{proof}
    The definitions of $\ppos, \pneg, \lambdapos, \lambdaneg$ from \cref{eq:ppos,eq:pneg,eq:psmooth-decomp,eq:ppos-normalizer}, respectively, results in the following simple decomposition:
    \begin{equation}
        \smoothpngramemp(\str) = \pemp(\str) + \lambdapos \ppos(\str) - \lambdaneg \pneg(\str).
    \end{equation}
Then, we proceed with some basic manipulations
    \begin{subequations}
    \begin{align}
         \KL(\smoothpngramemp& \mid\mid \qtheta)  \defeq \entropy(\smoothpngramemp, \qtheta) - \underbrace{\entropy(\smoothpngramemp)}_{\defeq C}\\
         & = \entropy(\smoothpngramemp, \qtheta) + C  \annot{(independence of $\entropy(\smoothpngramemp)$ with respect to $\qtheta$)}\\
         & = \entropy( \pemp + \lambdapos \ppos (\str) + \lambdaneg \pneg(\str), \qtheta)  + C \annot{(definitions of $\ppos, \pneg, \lambdapos, \lambdaneg$)}\\
         & = \entropy( \pemp, \qtheta) + \lambdapos \entropy( \ppos, \qtheta) + \lambdaneg \entropy( \pneg,  \qtheta) + C \annot{(linearity of cross-entropy)}\\
         & = \KL( \pemp \mid\mid \qtheta) + \underbrace{\lambdapos \KL( \ppos  \mid\mid \qtheta) + \lambdaneg \KL(\pneg \mid\mid \qtheta)}_{\defeq \regularizer(\qtheta)} + C\annot{(definition of $\regularizer$)}\\
         &= \KL( \pemp \mid\mid \qtheta) + \underbrace{\regPar}_{=1}\regularizer(\qtheta) + C. 
    \end{align}
    \end{subequations}
    This proves the result.
\end{proof}

\section{Experimental Details}

Experiments on WikiText-2 and IWSLT-14 were run on a shared cluster on NVIDIA Quadro RTX 6000 GPUs.
The Transformer models used for language modeling and machine translation have \num{58145792} and \num{39469056} parameters, respectively.

\begin{table}[h]
\centering
\begin{tabular}{l|l|l|l|l|l}
    \toprule
       Dataset & Split & Language & Vocabulary size & Samples & Number of tokens  \\
    \midrule
       WikiText-2 & Train & English & \num{16932} & \num{23767} & \num{2389674}  \\
       WikiText-2 & Validation& English& \num{16932} & \num{2461} & \num{255327} \\
       WikiText-2 & Test & English& \num{16932} & \num{2891} & \num{292710} \\
    \midrule
       IWSLT-14 & Train& English& \num{6628} & \num{160239} & \num{3788875} \\
       IWSLT-14 & Validation& English& \num{6628} & \num{7283} & \num{171339} \\
       IWSLT-14 & Test& English& \num{6628} & \num{6750} & \num{150178}  \\
    \midrule
       IWSLT-14 & Train& German & \num{8844} & \num{160239} & \num{3875352} \\
       IWSLT-14 & Validation & German& \num{8844} & \num{7283} & \num{175309}  \\
       IWSLT-14 & Test& German& \num{8844} & \num{6750} & \num{155088} \\
  \bottomrule
\end{tabular}
\caption{Dataset details}
\label{tab:dataset_details}
\end{table}

\section{Additional Experimental Results}\label{app:additional_experiments}
\cref{tab:best_WikiText_results} and~\cref{tab:best_iwslt_results} show the performance of the best-performing seed for each model. JM smoothing remains the best-performing technique on both datasets.
For language modeling, we test for statistical significance using a paired permutation test over sentence-level log-likelihoods, using the mean of the observation differences as the test statistic.
For machine translation, we test for statistical significance using paired bootstrap resampling as implemented in \texttt{sacreBLEU}. Different tests are performed using either the unregularized results or the add-$\lambda$ results as the baseline. In~\cref{tab:best_WikiText_results} and ~\cref{tab:best_iwslt_results}, the first symbol in $\dagger/\dagger$ refers to statistical significance with respect to the unregularized model, while the second refers to statistical significance over the add-$\lambda$ results. $\dagger$ indicates $p<0.05$; $\ddagger$ indicates $p<0.01$. 
In language modeling, all regularization methods perform significantly better than no regularization. However, only GT and JM smoothing perform better than add-$\lambda$ smoothing. For machine translation, we see that all regularized methods except for Katz smoothing perform significantly better than the unregularized baseline model, while only JM smoothing performs significantly better than add-$\lambda$ smoothing.

\begin{table}[h]
\parbox[t]{.45\linewidth}{
\centering
\begin{tabular}{l|l}
    \toprule
    Smoothing Method & ppl $\downarrow$ \\
    \midrule
    None & $144.67$           \\
    add-$\lambda$ {$\pars{(\regParPos=0.1, \regParNeg=0.05, \regParLS=0.01)}$}                         & $138.72^{\ddagger}$ \\
    \midrule
    GT  {$\pars{(\regParPos=0.1, \regParNeg=0.05)}$} & $137.00^{\ddagger/\ddagger}$          \\
    JM  {$\pars{(\regParPos=0.1, \regParNeg=0.5, \lambda_1=0.75)}$} & $134.63^{\ddagger/\ddagger}$         \\
    Katz  {$\pars{(\regParPos=0.1, \regParNeg=0.01, k=5)}$} & $139.44^{\ddagger}$  \\
    KEN  {$\pars{(\regParPos=0.1, \regParNeg=0.1)}$} & $140.71^{\ddagger}$ \\
  \bottomrule
\end{tabular}
\caption{Perplexity on WikiText-2 test set. Included are performances of models trained with no regularization (None), and with various smoothing methods. We report the best perplexity over $5$ independently trained models.\looseness=-1}\label{tab:best_WikiText_results}}
\hfill
\parbox[t]{.45\linewidth}{
\centering
\begin{tabular}{l|l}
    \toprule
       Smoothing Method           & BLEU $\uparrow$ \\
    \midrule
    None & $33.11$       \\
    add-$\lambda$ {$\pars{(\regParPos=0.1, \regParNeg=0.01, \regParLS=0.01)}$} & $33.41^{\dagger} $  \\
    \midrule
    GT {$\pars{(\regParPos=0.05, \regParNeg=0.5)}$} & $33.44^{\dagger}$  \\
    JM {$\pars{(\regParPos=0.1, \regParNeg=0.5, \lambda_1=0.5)}$} & $33.97^{\ddagger/\ddagger}$  \\
    Katz {$\pars{(\regParPos=0.1, \regParNeg=0.1, k=5)}$} & $33.31$  \\
    KEN {$\pars{(\regParPos=0.1, \regParNeg=0.1)}$} & $33.58^{\ddagger}$  \\
  \bottomrule
\end{tabular}
\caption{BLEU on test set of IWSLT-14 DE-EN. Different regularized methods are compared to no regularization (None). We report the best BLEU score over $5$ independently trained models.}
\label{tab:best_iwslt_results}}
\end{table}

\subsection{WMT-14 results}\label{app:wmt}

To evaluate the performance of our regularizers on larger datasets, we perform a preliminary evaluation of our methods on the WMT-14 machine translation dataset.
We download and preprocess the data following a script provided by \texttt{fairseq}.\footnote{\url{https://github.com/facebookresearch/fairseq/blob/main/examples/translation/prepare-wmt14en2de.sh}}
Experiments on WMT were run on two NVIDIA Tesla V100 GPUs.
Given our computational constraints, we limit our experiments to our best-performing smoothing method (Jelinek--Mercer), add-$\lambda$ smoothing, and the unregularized baseline. 
We also use the same hyperparameter set as in the IWSLT-14 experiments for all methods.
\cref{tab:wmt_results} shows the results of our experiments. JM smoothing obtains the best performance out of the three methods.
Using paired bootstrap resampling, we assessed that the improvements of JM smoothing are statistically significant over the unregularized baseline, while they are not significant over the results obtained using add-$\lambda$ smoothing ($p=0.07$).
A regularization hyperparameter search on the new dataset might yield larger performance improvements for all smoothing-based methods.

\begin{table}[h]
\centering
\parbox{.45\linewidth}{
\begin{tabular}{l|l}
    \toprule
       Smoothing Method           & BLEU $\uparrow$ \\
    \midrule
    None & $26.75$       \\
    add-$\lambda$ {$\pars{(\regParPos=0.1, \regParNeg=0.01, \regParLS=0.01)}$} & $27.07$  \\
    JM {$\pars{(\regParPos=0.1, \regParNeg=0.5, \lambda_1=0.5)}$} & $27.35^{\ddagger}$  \\
  \bottomrule
\end{tabular}
\caption{BLEU scores on the (EN-DE) \texttt{WMT14/full} test set. JM and add-$\lambda$ smoothing are compared to no regularization.}\label{tab:wmt_results}}
\end{table}

\end{document}